\newcommand\footnoteref[1]{\protected@xdef\@thefnmark{\ref{#1}}\@footnotemark}
\DeclareMathOperator*{\argmax}{arg\,max}
\titleformat*{\section}{\large\bfseries}
\titleformat*{\subsection}{\large\bfseries}
\titleformat*{\subsubsection}{\bfseries}
\titleformat*{\paragraph}{\bfseries}
\titleformat*{\subparagraph}{\bfseries}
\numberwithin{equation}{section}
\theoremstyle{plain}
\newtheorem{theorem}{Theorem}[section]
\newtheorem{lemma}{Lemma}[section]
\newtheorem{proposition}{Proposition}[section]
\theoremstyle{definition}
\newtheorem{definition}{Definition}[section]
\theoremstyle{remark}
\newcommand{\E}{\mathbb{E}}
\title{\LARGE \bf On the Linear convergence of Natural Policy Gradient Algorithm\footnote{A version of this paper was first submitted to a conference in Mar 2021.}}
\author{
{\normalsize Sajad Khodadadian}\footnote{H. Milton Stewart School of Industrial \& Systems Engineering, Georgia Institute of Technology, Atlanta, GA, 30332, USA, {\tt\small \{\href{mailto:skhodadadian3@gatech.edu}{skhodadadian3}, \href{mailto:sushil@gatech.edu}{sushil}, \href{mailto:prakirt@gatech.edu}{prakirt}, \href{mailto:siva.theja@gatech.edu}{siva.theja}\}@gatech.edu}} $\,$\and
{\normalsize Prakirt Raj Jhunjhunwala}$^\dagger$\and
{\normalsize Sushil Mahavir Varma}$^\dagger$\and
{\normalsize Siva Theja Maguluri}$^\dagger$
}
\date{}
\begin{document}

\maketitle

\setlength{\abovedisplayskip}{5pt}
\setlength{\belowdisplayskip}{5pt}

\begin{abstract}
Markov Decision Processes are classically solved using Value Iteration and Policy Iteration algorithms. Recent interest in Reinforcement Learning has motivated the study of methods inspired by optimization, such as gradient ascent. Among these, a popular algorithm is the Natural Policy Gradient, which is a mirror descent variant for MDPs. This algorithm forms the basis of several popular Reinforcement Learning algorithms such as Natural actor-critic, TRPO, PPO, etc, and so is being studied with growing interest. It has been shown that Natural Policy Gradient with constant step size converges with a sublinear rate of $\mathcal{O}(1/k)$ to the global optimal. In this paper, we present improved finite time convergence bounds, and show that this algorithm has \textit{geometric} (also known as linear) asymptotic convergence rate. We further improve this convergence result by introducing a variant of Natural Policy Gradient with adaptive step sizes. Finally, we compare different variants of policy gradient methods experimentally.
\end{abstract}

\section{Introduction}
Markov Decision Process (MDP) \cite{bellman1954theory} is a general framework to model sequential decision-making systems. In an MDP, in each time step, an agent chooses a control (action) based on a policy in hand and depending on its action the agent receives some reward. The goal of the agent is to find a policy that maximizes its long-term reward. Mathematical modeling of the systems with MDPs has numerous applications \cite{white1985real}. An MDP forms the underlying model to study Reinforcement Learning (RL) algorithms, and this has brought renewed interest in their study.  

The classical approach to finding the optimal policy of an MDP is using Dynamic Programming to solve the Bellman fixed point equation. Value Iteration (VI) and policy iteration (PI) are the two most popular approaches in this paradigm. VI, is an iterative method to find the optimal policy, which exploits the contraction property of MDPs \cite{puterman1995markov}. By application of the Banach fixed point theorem, it is known that VI can find the optimal policy geometrically fast.

In contrast to VI, PI works in the space of policies, where the algorithm searches for the optimal policy by iteratively improving the current policy. It is known that in a finite MDP, PI can find the optimal policy in a finite number of steps \cite{puterman1995markov}. However, the main drawback of PI is the sharp movement of the algorithm on the space of the policies (usually deterministic). This makes the algorithm highly unstable, and in a sample-based setting (such as Reinforcement Learning), PI can result in a high variance in the estimation of the optimal policy.

Policy Gradient methods are ``smoother'' variants of PI that use gradient-descent-like optimization methods in the space of \textit{stochastic} policies, to perform policy improvement \cite{sutton1999policy}. Here, the policy is viewed as a variable, and the goal is to maximize the objective function in this variable, which is simply the long-term reward achieved by the policy. PG methods in Reinforcement Learning aim at finding the optimal policy by gradually taking local steps in the policy space, in a direction based on the the gradient of the objective function. Natural Policy Gradient (NPG) \cite{kakade2001natural} is a  policy gradient method which uses natural gradient descent \cite{raskutti2015information}, under which the gradient vector is multiplied by the inverse of the Fisher information matrix as a preconditoner \cite{agarwal2019theory}. NPG can also be thought of as a variant of mirror descent  \cite{nemirovsky1983problem, geist2019theory, shani2020adaptive},
and as a smoother approximation of PI. We present more details about these two points of view in Section \ref{sec:problem_formulation}. 

In contrast to geometric or faster convergence of VI and PI, the best known rate among policy gradient methods is $\mathcal{O}(1/k)$ convergence rate for NPG \cite{geist2019theory, agarwal2019theory}. A question of central importance in this space is the following: \textit{Do policy gradient methods achieve geometric (linear) rate of convergence to the optimal policy? }
\pagebreak
\\\textbf{Main Contributions:}
\begin{itemize}
   \item We answer the above question in affirmative, and show that vanilla NPG with constant step-size exhibits a linear rate of convergence asymptotically.  In particular, we establish finite time convergence bounds, and show that while the error rate may be  sub linear up to a threshold, it improves to a geometric rate after that. The key idea in the proof is that after the threshold, the probability of picking suboptimal actions decays geometrically fast.
   Furthermore, we consider the special case of single state MDP and show that our convergence rate is tight. 
   \item We further introduce a variant of NPG with adaptive step sizes, which has linear rate of convergence all along. Under an additional assumption on the MDP, we show that such an adaptive step size scheme can also attain super-linear convergence rate. The key idea in establishing these results is that NPG is a soft approximation of policy iteration, and the adaptive step sizes are chosen in order to ensure that the policy obtained from NPG approaches policy iteration. 
   \item Finally, we present simulation results to compare the performance of several variants of NPG algorithms presented in the paper.
\end{itemize}

\subsection{Related Work}\label{sec:related_work}

NPG algorithm was first introduced in \cite{kakade2001natural}. Several sample based variants of it for use in RL are proposed, such as Natural actor-critic \cite{peters2008natural, NIPS2009_3767, NIPS2013_5184, bhatnagar2009natural}, TRPO \cite{trpo}, PPO \cite{ppo} etc. There has been a flurry of work studying the finite-time convergence of policy gradient methods, and in particular NPG. The authors in \cite{azar2012dynamic} proposed DPP, which is a variant of PI, and they characterize its convergence bound. Later, \cite{geist2019theory} proposed MD-MPI, a variant of Mirror Descent for MDPs with constant step size $1$. They further characterized $\mathcal{O}(1/k)$ convergence of this algorithm to the optimal. The authors in \cite{agarwal2019theory} have shown that this bound can be improved by choosing a large enough step size. Furthermore, they have shown that the updates of NPG improve the policy monotonically. Later, \cite{shani2020adaptive} characterized the convergence of NPG for time-dependent step size.

While more recent work focused on establishing geometric convergence, most of the results study the NPG algorithm with some modification. In \cite{cen2020fast,lan2021policy}, linear convergence of a variant of NPG with regularization have been studied. In particular, by adding a regularizer to the objective function, the authors in \cite{cen2020fast,lan2021policy} show geometric convergence of NPG, by exploiting the strong convexity of the regularizer. By choosing an small enough regularizer coefficient, \cite{cen2020fast,lan2021policy} can ensure geometric convergence to an arbitrary close neighborhood of the global optimum. In contrast to these works, we show that the vanilla NPG has an asymptotic geometric rate of convergence even though the objective is known to be nonconvex \cite{agarwal2019theory}. The authors in \cite{bhandari2020note} show that NPG converges geometrically without any regularization, but the  step sizes have to be chosen by performing a line search. In contrast, we study NPG with constant step sizes and adaptive step sizes with explicit formulas for the step sizes.

Regarding convergence bounds for other policy gradient methods, in \cite{zhang2020variational} the authors derived a Variational Policy Gradient theorem, and show a $\mathcal{O}(1/k)$ convergence rate. In \cite{mei2020global} the authors characterized the convergence of softmax policy gradient method. They have shown $\mathcal{O}(1/k)$ and exponential convergence rate for the original and entropy regularized softmax policy gradient, respectively.

The rest of this paper is organized as follows: in Section \ref{sec:problem_formulation} we introduce the MDP formulation and NPG algorithm, in Section \ref{sec:theoretical_results} we provide the theoretical results for both constant and adaptive step sizes along with sketch of their proofs, with the complete proofs deferred to the appendix. In Section \ref{sec:exp} we present experimental results, and present concluding remarks and future work in Section \ref{sec:conclusion}. 

\section{Natural Policy Gradient for MDPs} \label{sec:problem_formulation}
We consider a Markov Decision Process (MDP) represented by a tuple $(\mathcal{S},\mathcal{A},\mathcal{R}, \mathcal{P},\gamma)$, where $\mathcal{S}$ and $\mathcal{A}$ are finite sets of states and actions, $\mathcal{R}: \mathcal{S}\times\mathcal{A}\rightarrow [0,1]$ is the random reward function, $\mathcal{P}: \mathcal{S}\times\mathcal{A}\rightarrow \Delta^{|\mathcal{S}|}$ (where $\Delta^{|\mathcal{S}|}$ is the probability simplex on $\mathbb{R}^{|\mathcal{S}|}$) is the collection of transition probabilities, and $\gamma\in(0,1)$ is the discount factor. 

The dynamics of the MDP is as follows. At each time step $t$, the system is at some state $S_t$. Given some policy $\pi:\mathcal{S}\rightarrow\Delta^{|\mathcal{A}|}$ at hand, the agent chooses an action $A_t$ by sampling from the policy as $A_t\sim\pi(\cdot|S_t)$. Given the state and the action $S_t, A_t$, the agent receives some reward $\mathcal{R}(S_t,A_t)$, and further the system transfers to the next state, which is sampled from the transition probability as $S_{t+1}\sim \mathcal{P}(\cdot|S_t,A_t)$. Since different policies incur different random trajectories over state-action pairs, the reward received by the agent depends on the policy $\pi$ which it follows. The value function $V^{\pi}(\cdot)$ is a measure of average long term reward achieved by the agent when it follows policy $\pi$, and is defined as
\[
V^\pi(s) = \mathbb{E}\left[\sum_{t=0}^\infty \gamma^t \mathcal{R}(S_t,A_t)\mid S_0\sim s, A_t\sim\pi(\cdot|S_t)\right],
\]
where the expectation is with respect to the randomness in the state-action trajectory, and the randomness in the reward function. Given some distribution over states $\rho \in \Delta^{|\mathcal{S}|}$, the average value function is defined as
\[
V^{\pi}(\rho)=\mathbb{E}_{s\sim\rho}[V^\pi(s)].
\]
The goal of the agent is to find an optimal policy $\pi^\star$, such that
\begin{align}\label{eq:pistar}
\pi^\star\in \argmax_\pi V^{\pi}(\rho).
\end{align}
Natural Policy Gradient (NPG) \cite{kakade2001natural, peters2008natural, agarwal2019theory} is an iterative algorithm to find the $\pi^\star$ in Eq. \eqref{eq:pistar}. In the rest of this section, we present the NPG update and show that it can be thought of as a smooth variant of policy improvement as well as a Mirror descent variant. Given a current policy $\pi_k$, the update of the NPG under tabular softmax parameterization is as follows
\begin{align}
    \pi_{k+1}(a|s) = \frac{\pi_k(a|s)\exp(\eta Q^{\pi_k}(s,a))}{\sum_{a'}\pi_k(a'|s)\exp(\eta Q^{\pi_k}(s,a'))}\quad \forall s,a,\label{eq:mul-weight-update2}
\end{align}
where $\eta$ is the step-size in the update and, $Q^{\pi}$ is the $Q$-function corresponding to the policy $\pi$ and is defined as
\begin{align*}
Q^\pi(s,a) = \mathbb{E}\left[\sum_{t=0}^\infty \gamma^t \mathcal{R}(S_t,A_t)|S_0= s, A_0=a, A_t\sim\pi(\cdot|S_t)\right].
\end{align*}

\subsection{Relation to Policy Iteration}\label{sec:PIPW}
Given a policy $\pi$ and a state $s$, let $\mathcal{A}^{\pi}_s$, the set of optimal actions in state $s$ with respect to the policy $\pi$, i.e., 
\begin{align}\label{eq:Optimal_action_set}
    \mathcal{A}^{\pi}_s = \argmax_{a\in \mathcal{A}} Q^\pi (s,a).
\end{align}

PI improves the current policy $\pi_k$ to a new policy $\pi_{k+1}$ by assigning zero weight to all the actions $a\notin \mathcal{A}_s^{\pi_k}$ for a state $s$, while assigning an arbitrary distribution to all $a\in\mathcal{A}_s^{\pi_k}$. However, in a sample based setting such as RL, where we do not have access to the exact $Q^{\pi_k}$ function, and it needs to be estimated from the data, implementation of PI may lead to high estimation variance, due to non smooth update of the policy. This motivates NPG as a ``smooth'' implementation of PI as follows. Instead of using a maximum with respect to $Q^{\pi_k}$ to assign zero probability to all actions $a\notin \mathcal{A}^{\pi_k}_s$, in NPG, one performs a softer variant of max as in Eq. \eqref{eq:mul-weight-update2} which assigns higher probability to more valuable actions. Note that the parameter $\eta$ tunes the softness, and as $\eta \to \infty $ in Eq. \eqref{eq:mul-weight-update2}, we recover PI. 

In other words, the higher $Q^{\pi_k}(s,a)$, the larger the update of $\pi_{k}(a|s)$ will be. Note that the denominator in Eq. \eqref{eq:mul-weight-update2} is added just to ensure that $\pi_{k+1}$ is a valid distribution function.  

\subsection{Natural Policy Gradient as Mirror Descent}\label{sec:MDPW}
Furthermore, one can derive the NPG as a variant of Mirror Descent. Given policy $\pi_k$, a step of the Mirror Descent update of the policy can be written as \cite{beck2017first}
\begin{align}
    \pi_{k+1}=\argmax_{\pi\in\Pi} \left\{\eta \langle\nabla V^{\pi_k}(\rho),\pi-\pi_k\rangle+B(\pi,\pi_k)\right\},\label{eq:MD_update}
\end{align}
where $\eta$ is the step size, and $B(\cdot,\cdot)$ is an appropriately chosen  Bregman divergence. Denote 
\[
d_\rho^{\pi}(s) = (1-\gamma)\sum_{t=0}^\infty \gamma^t\mathbb{P}(S_t=s\mid S_0\sim \rho, A_t\sim \pi(\cdot|S_t))
\]
as the discounted state visitation distribution. Replacing  $B(\pi,\pi_k)$ with $\sum_s d_\rho^{\pi_k}(s) \mathcal{KL}(\pi(\cdot|s)|\pi_k(\cdot|s))$, where $\mathcal{KL}(\cdot|\cdot)$ represents the Kullback–Leibler divergence \cite{cover1999elements}, it can be shown \cite{geist2019theory,shani2020adaptive} that the update of the policy in Eq. \eqref{eq:MD_update} is equivalent to the NPG update in Eq. \eqref{eq:mul-weight-update2}. Note that it is not known if this quantity is a well-defined Bregman divergence, and so NPG can be considered as a variant of mirror descent.

\section{Theoretical Results} \label{sec:theoretical_results}
It has been shown \cite{geist2019theory,agarwal2019theory} that the update \eqref{eq:mul-weight-update2} enjoys $V^{\pi^\star}-V^{\pi_k}\leq\mathcal{O}(1/k)$ rate of convergence.  In this section, we show that this algorithm indeed has asymptotically geometric convergence.

Before presenting our theoretical results, we introduce some useful notations. The advantage function is defined as 
\[
A^{\pi}(s,a) = Q^{\pi}(s,a)-V^\pi(s).
\]
Throughout the paper, any parameter with superscript $^\star$ corresponds to the optimal policy. For instance, $V^\star(\rho)\equiv V^{\pi^\star}(\rho)$, and $Q^\star(s,a)\equiv Q^{\pi^\star}(s,a)$. We denote the set of optimal actions in state $s$ as $\mathcal{A}_s^\star = \argmax_a Q^\star(s,a)$. 

Two useful lemmas are in order.
\begin{lemma}\label{lem:monotone_main}
For every policy $\pi$, and all $s,a$, we have
\[
 0\leq V^{\pi}(s)\leq \frac{1}{1-\gamma},\quad\quad 0\leq Q^{\pi}(s,a)\leq \frac{1}{1-\gamma}, \quad\quad \frac{-1}{1-\gamma}\leq A^{\pi}(s,a)\leq\frac{1}{1-\gamma}.
\]
\end{lemma}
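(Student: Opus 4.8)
The plan is to bound each quantity directly from its series definition, using only two facts: the reward satisfies $0\le \mathcal{R}(s,a)\le 1$ pointwise, and $\gamma\in(0,1)$ so that $\sum_{t=0}^\infty \gamma^t = \tfrac{1}{1-\gamma}$. First I would treat $V^\pi(s)$. Since every summand $\gamma^t\mathcal{R}(S_t,A_t)$ is nonnegative, the partial sums are nondecreasing, so by monotone convergence the expectation and the infinite sum may be interchanged, giving $V^\pi(s)=\sum_{t=0}^\infty \gamma^t\,\mathbb{E}[\mathcal{R}(S_t,A_t)]$. Nonnegativity of the reward yields $V^\pi(s)\ge 0$, and $\mathbb{E}[\mathcal{R}(S_t,A_t)]\le 1$ yields $V^\pi(s)\le \sum_{t=0}^\infty\gamma^t=\tfrac{1}{1-\gamma}$.

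Next I would repeat the identical argument for $Q^\pi(s,a)$: its definition is the same discounted sum of rewards, merely conditioned on the first action $A_0=a$ being fixed, so the same monotone-convergence interchange and the same termwise bounds $0\le\mathbb{E}[\mathcal{R}(S_t,A_t)\mid S_0=s,A_0=a]\le 1$ give $0\le Q^\pi(s,a)\le\tfrac{1}{1-\gamma}$. (Alternatively one can note $V^\pi(s)=\sum_a \pi(a|s)Q^\pi(s,a)$ is a convex combination of the $Q^\pi(s,\cdot)$ values and deduce the $V$-bounds from the $Q$-bounds, but deriving each from the definition is cleaner.)

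Finally, for the advantage $A^\pi(s,a)=Q^\pi(s,a)-V^\pi(s)$, I would simply combine the two preceding displays: since $Q^\pi(s,a)\in[0,\tfrac{1}{1-\gamma}]$ and $V^\pi(s)\in[0,\tfrac{1}{1-\gamma}]$, their difference lies in $[-\tfrac{1}{1-\gamma},\tfrac{1}{1-\gamma}]$, which is exactly the claimed bound.

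There is no real obstacle here; the lemma is a routine consequence of boundedness of the reward and summability of the discount series. The only point that deserves a word of care is the interchange of expectation and the infinite sum, which is justified by monotone convergence (all terms nonnegative) or equivalently by dominated convergence with dominating summable bound $\gamma^t$.
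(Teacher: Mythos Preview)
Your proof is correct and follows essentially the same approach as the paper: bound $V^\pi$ and $Q^\pi$ termwise using $0\le\mathcal{R}(s,a)\le 1$ and the geometric series $\sum_t\gamma^t=\tfrac{1}{1-\gamma}$, then deduce the advantage bounds from $A^\pi=Q^\pi-V^\pi$. The only difference is that you explicitly invoke monotone convergence to justify the interchange of sum and expectation, which the paper leaves implicit.
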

\begin{lemma}\label{lem:PDL_main}
(\textup{Also known as the performance difference lemma \cite[Lemma 6.1]{kakade2002approximately}})
For any two policy $\pi_1$ and $\pi_2$, and any initial distribution $\mu$, we have
\[
V^{\pi_1}(\mu)-V^{\pi_2}(\mu) = \frac{1}{1-\gamma}\sum_{s,a} d^{\pi_1}_\mu(s)\pi_1(a|s)A^{\pi_2}(s,a).
\]
\end{lemma}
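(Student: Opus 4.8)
The plan is to prove the identity by expanding $V^{\pi_1}(\mu)$ as an expectation over trajectories generated by $\pi_1$ and inserting a telescoping sum built from $V^{\pi_2}$. Concretely, I would first fix a trajectory $(S_0,A_0,S_1,A_1,\dots)$ with $S_0\sim\mu$ and $A_t\sim\pi_1(\cdot|S_t)$, $S_{t+1}\sim\mathcal{P}(\cdot|S_t,A_t)$, and write the pathwise identity
\[
\sum_{t=0}^\infty \gamma^t \mathcal{R}(S_t,A_t) = V^{\pi_2}(S_0) + \sum_{t=0}^\infty \gamma^t\bigl(\mathcal{R}(S_t,A_t) + \gamma V^{\pi_2}(S_{t+1}) - V^{\pi_2}(S_t)\bigr),
\]
which holds because $\sum_{t=0}^T \gamma^t\bigl(V^{\pi_2}(S_t) - \gamma V^{\pi_2}(S_{t+1})\bigr)$ telescopes to $V^{\pi_2}(S_0) - \gamma^{T+1}V^{\pi_2}(S_{T+1})$, and the tail term vanishes as $T\to\infty$ since $V^{\pi_2}$ is uniformly bounded by Lemma \ref{lem:monotone_main} and $\gamma<1$.

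Next I would take expectations over the trajectory under $\pi_1$. Since $S_0\sim\mu$, the first term contributes $V^{\pi_2}(\mu)$, and rearranging gives
\[
V^{\pi_1}(\mu) - V^{\pi_2}(\mu) = \mathbb{E}_{\pi_1}\!\left[\sum_{t=0}^\infty \gamma^t\bigl(\mathcal{R}(S_t,A_t) + \gamma V^{\pi_2}(S_{t+1}) - V^{\pi_2}(S_t)\bigr)\right],
\]
where interchanging the infinite sum with the expectation is justified by the uniform bounds of Lemma \ref{lem:monotone_main} (reward in $[0,1]$, value in $[0,\tfrac{1}{1-\gamma}]$) together with dominated convergence. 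The key step is then conditioning: by the tower property, $\mathbb{E}\bigl[\mathcal{R}(S_t,A_t) + \gamma V^{\pi_2}(S_{t+1}) \mid S_t,A_t\bigr] = Q^{\pi_2}(S_t,A_t)$ by the definition of $Q^{\pi_2}$, so each summand collapses to $\mathbb{E}_{\pi_1}\bigl[\gamma^t\bigl(Q^{\pi_2}(S_t,A_t) - V^{\pi_2}(S_t)\bigr)\bigr] = \mathbb{E}_{\pi_1}\bigl[\gamma^t A^{\pi_2}(S_t,A_t)\bigr]$.

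Finally I would unpack the remaining expectation in terms of state-action visitation probabilities. Writing $\mathbb{P}(S_t=s,A_t=a) = \mathbb{P}(S_t=s)\,\pi_1(a|s)$ and swapping the order of summation,
\[
\mathbb{E}_{\pi_1}\!\left[\sum_{t=0}^\infty \gamma^t A^{\pi_2}(S_t,A_t)\right] = \sum_{s,a}\pi_1(a|s)\,A^{\pi_2}(s,a)\sum_{t=0}^\infty \gamma^t\,\mathbb{P}(S_t=s),
\]
and recognizing that $\sum_{t=0}^\infty \gamma^t\,\mathbb{P}(S_t=s\mid S_0\sim\mu,\pi_1) = \tfrac{1}{1-\gamma}\,d^{\pi_1}_\mu(s)$ by the definition of the discounted state visitation distribution yields the claimed identity. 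The only genuinely delicate point is the telescoping step — lining up the correction terms $\gamma V^{\pi_2}(S_{t+1}) - V^{\pi_2}(S_t)$ correctly and checking the tail vanishes; once that is in place, the rest is bookkeeping with conditional expectations and the definition of $d^{\pi_1}_\mu$.
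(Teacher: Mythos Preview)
Your proof is correct and is essentially the classical argument of Kakade (2002) that the paper cites. Note that the paper itself does not supply a proof of Lemma~\ref{lem:PDL_main}; it simply invokes the result as \cite[Lemma 6.1]{kakade2002approximately}, so there is nothing to compare against beyond observing that your telescoping-plus-tower-property derivation is exactly the standard one underlying that citation.
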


\subsection{Constant step size Natural Policy Gradient} \label{sec:con_step_size}
In this section, we show that the Natural policy gradient algorithm under constant step size converges geometrically. The pseudocode for NPG algorithm with constant step size is provided in Algorithm \ref{alg:NPG}.
\begin{algorithm}[h]\caption{NPG with constant step size}\label{alg:NPG}
\begin{algorithmic}[1] 
    \STATE {\bfseries Input:} The step size $\eta$, number of iterations $K$
	\STATE {\bfseries Initialization:} $\pi_0(a|s)=\frac{1}{|\mathcal{A}|}~\forall a,s$
	\FOR{$k=0,1,\dots,K-1$}
	\vspace{2mm}
	\STATE \quad Calculate $Q^{\pi_k}(s,a), ~~\forall s,a$
	\vspace{2mm}
	\STATE \quad$\pi_{k+1}(a|s)=\frac{\pi_k(a|s)\exp(\eta Q^{\pi_k}(s,a))}{\sum_a \pi_k(a|s)\exp(\eta Q^{\pi_k}(s,a))}~\forall a,s$
	\vspace{2mm}
	\ENDFOR
	\STATE\textbf{Output:} $\pi_K$
\end{algorithmic}
\end{algorithm}

First, we define the set of dummy states as follows
\begin{definition}
The set of of dummy states $\mathcal{S}_d$ is defined as $\mathcal{S}_d=\{s\in\mathcal{S}\mid A^\star(s,a)=0 ~\text{for all}~ a\in\mathcal{A}\}$. In other words, $\mathcal{S}_d$ is the set of states where playing any actions is optimal.
\end{definition}

Next, we define the optimal advantage function gap as follows.
\begin{definition}
\label{def:optimality_gap}
The optimal advantage function gap $\Delta$ is defined as follows:
\[
\Delta = -\max_{s\notin\mathcal{S}_d}\max_{a\notin \mathcal{A}^\star_s} A^\star(s,a),
\]
and by convention, if $\mathcal{S}_d=\mathcal{S}$, then $\Delta=\infty$.
\end{definition}
Using the classical results in MDP, we can show the following lemma:
\begin{lemma}\label{lem:delta_positive}
Let $\Delta$ be the optimal advantage function gap of an arbitrary MDP. We have $\Delta \geq 0$.
\end{lemma}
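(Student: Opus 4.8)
The plan is to reduce the claim to the Bellman optimality equation. First, if $\mathcal{S}_d=\mathcal{S}$ then $\Delta=\infty\ge 0$ by convention and there is nothing to prove, so I would assume $\mathcal{S}\setminus\mathcal{S}_d\neq\emptyset$ and fix an arbitrary $s\notin\mathcal{S}_d$.

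The key step is to invoke the classical fact that the optimal value function satisfies $V^\star(s)=\max_{a\in\mathcal{A}}Q^\star(s,a)$ (the Bellman optimality equation; recall that $\pi^\star$ is simultaneously optimal for every starting state). Consequently, for every action $a$ we have
\[
A^\star(s,a)=Q^\star(s,a)-V^\star(s)=Q^\star(s,a)-\max_{a'\in\mathcal{A}}Q^\star(s,a')\le 0,
\]
with equality precisely when $a\in\mathcal{A}^\star_s=\argmax_a Q^\star(s,a)$.

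Next I would check that the inner maximum in Definition \ref{def:optimality_gap} is taken over a nonempty set: since $s\notin\mathcal{S}_d$, there is some action $a_0$ with $A^\star(s,a_0)\neq 0$, and by the display above this forces $A^\star(s,a_0)<0$, hence $a_0\notin\mathcal{A}^\star_s$, so $\mathcal{A}\setminus\mathcal{A}^\star_s\neq\emptyset$. Therefore $\max_{a\notin\mathcal{A}^\star_s}A^\star(s,a)\le 0$. Taking the maximum over all $s\notin\mathcal{S}_d$ (a maximum of nonpositive quantities) preserves the inequality, so $\max_{s\notin\mathcal{S}_d}\max_{a\notin\mathcal{A}^\star_s}A^\star(s,a)\le 0$, i.e.\ $\Delta=-\max_{s\notin\mathcal{S}_d}\max_{a\notin\mathcal{A}^\star_s}A^\star(s,a)\ge 0$.

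There is no serious obstacle here; the only point needing a little care is confirming that $\mathcal{A}\setminus\mathcal{A}^\star_s$ is nonempty for every $s\notin\mathcal{S}_d$, so that the inner $\max$ is well defined — which is exactly what the definition of a dummy state rules out. In fact the same argument yields the slightly stronger statement $\Delta>0$ whenever $\mathcal{S}_d\neq\mathcal{S}$, since each such inner maximum is then strictly negative.
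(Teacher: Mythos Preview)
Your argument is correct and follows essentially the same route as the paper: both rely on the Bellman optimality characterization $V^\star(s)=\max_a Q^\star(s,a)$ (the paper just cites \cite[Theorem~5.5.3]{puterman1995markov} for this), from which $A^\star(s,a)\le 0$ and hence $\Delta\ge 0$ is immediate. Your version is simply more explicit, in particular in verifying that $\mathcal{A}\setminus\mathcal{A}^\star_s\neq\emptyset$ for $s\notin\mathcal{S}_d$ so that the inner maximum is well defined.
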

\begin{proof}
The proof follows directly from \cite[Theorem ~5.5.3]{puterman1995markov} and by noting that the optimal policy satisfies $V^{\star}(s)\geq V^{\pi}(s)$ for all $s$ and all policy $\pi$.
\end{proof}

Theorem \ref{thm:MDP} characterizes the convergence rate of NPG algorithm with constant step size.
\begin{theorem}\label{thm:MDP}
Consider Algorithm \ref{alg:NPG} with constant step size $\eta$, and $K$ number of iterations. For all $K\geq 0$, the following bound holds:
\[
V^\star(\rho)-V^{\pi_K}(\rho)\leq 
\frac{1}{(1-\gamma)^2} e^{ -(K-\kappa)(1-1/\lambda)\eta\Delta} 
\]
where, $\kappa=\frac{\lambda}{\Delta}\left[\frac{\log(|\mathcal{A}|)}{\eta}+\frac{1}{(1-\gamma)^2}\right]$, and $\lambda>1$ is an arbitrary number.
\end{theorem}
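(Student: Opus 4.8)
The plan is to exploit the two viewpoints developed in the paper: that NPG is a soft policy iteration, and that one can control suboptimality via the performance difference lemma. The central mechanism, as advertised in the introduction, is that the probability mass NPG places on suboptimal actions decays geometrically once $k$ exceeds the threshold $\kappa$. So the first step is to track $\pi_k(a\mid s)$ for a suboptimal action $a \notin \mathcal{A}_s^\star$ at a non-dummy state $s$. Unrolling the multiplicative-weights update \eqref{eq:mul-weight-update2} from $\pi_0(a\mid s) = 1/|\mathcal{A}|$ gives $\pi_k(a\mid s) = \frac{1}{Z_k}\exp\!\big(\eta\sum_{j=0}^{k-1} Q^{\pi_j}(s,a)\big)$ for a normalizing $Z_k$, so the ratio $\pi_k(a\mid s)/\pi_k(a'\mid s)$ for any reference action $a'$ is governed by $\eta \sum_{j=0}^{k-1}\big(Q^{\pi_j}(s,a) - Q^{\pi_j}(s,a')\big)$. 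Choosing $a'$ to be (close to) optimal and using that $Q^{\pi_j} \to Q^\star$ with $A^\star(s,a) \le -\Delta$, the exponent becomes increasingly negative at rate roughly $\eta\Delta$ per step — this is the geometric decay.

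Next I would make this rigorous by combining it with the known monotone-improvement and $\mathcal{O}(1/k)$ results. Specifically, from Lemma \ref{lem:monotone_main} and the $\mathcal{O}(1/k)$ bound of \cite{agarwal2019theory,geist2019theory}, after $O(\kappa)$ iterations the value $V^{\pi_k}$ is already within a fixed constant (on the order of $\Delta$, or rather a $1/\lambda$-fraction of the relevant gaps) of $V^\star$; this is precisely what the $\frac{1}{(1-\gamma)^2}$ term inside $\kappa$ encodes. Once $V^{\pi_k}$ is that close, $Q^{\pi_k}(s,a) - Q^{\pi_k}(s,a^\star_s)$ is strictly negative, bounded above by something like $-(1-1/\lambda)\Delta$, for every suboptimal $a$ — the factor $\lambda>1$ is the slack we pay to convert the asymptotic statement $A^{\pi_j}\to A^\star$ into a uniform bound valid for all $j \ge \kappa$. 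Feeding this uniformly negative advantage back into the telescoped exponent shows $\sum_{a\notin\mathcal{A}_s^\star}\pi_k(a\mid s) \le C\, e^{-(k-\kappa)(1-1/\lambda)\eta\Delta}$ (the $\log|\mathcal{A}|/\eta$ piece of $\kappa$ absorbs the initial uniform weights and $Z_k$).

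Finally I would close the loop with the performance difference lemma (Lemma \ref{lem:PDL_main}) applied to $\pi^\star$ and $\pi_k$: $V^\star(\rho)-V^{\pi_K}(\rho)$ is, up to the $\frac{1}{1-\gamma}$ factor and the (bounded by $1$) visitation weights $d_\rho^{\pi^\star}(s)$, a convex combination of $-A^{\pi_K}(s,a)$ terms; the contributions of optimal actions vanish or are nonpositive, and each suboptimal-action term is at most $\frac{1}{1-\gamma}$ in magnitude by Lemma \ref{lem:monotone_main}, so the whole thing is bounded by $\frac{1}{(1-\gamma)^2}\max_s \sum_{a\notin\mathcal{A}_s^\star}\pi_K(a\mid s)$, which we have just shown decays as $e^{-(K-\kappa)(1-1/\lambda)\eta\Delta}$. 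The degenerate case $\mathcal{S}_d=\mathcal{S}$ (where $\Delta=\infty$) is handled separately and trivially, since then every policy is optimal.

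The main obstacle I anticipate is making the passage from "$A^{\pi_j}\to A^\star$" to a \emph{uniform} negativity bound $Q^{\pi_j}(s,a)-Q^{\pi_j}(s,a^\star)\le -(1-1/\lambda)\Delta$ for all $j$ beyond the threshold genuinely clean: one must be careful that the reference action $a^\star$ used in the comparison is optimal \emph{for $\pi_j$} (or uniformly close to it) rather than merely for $\pi^\star$, which is where monotonicity of the NPG updates and a perturbation bound relating $Q^{\pi_j}$ to $Q^\star$ in terms of $V^\star(\rho)-V^{\pi_j}(\rho)$ must be invoked. Getting the constants to line up so that exactly $\kappa=\frac{\lambda}{\Delta}\big[\frac{\log|\mathcal{A}|}{\eta}+\frac{1}{(1-\gamma)^2}\big]$ falls out — rather than some looser expression — is the delicate bookkeeping, but the structure above should deliver it.
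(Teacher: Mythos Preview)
Your overall strategy matches the paper's: use the known $\mathcal{O}(1/k)$ rate to reach the threshold $\kappa$, show geometric decay of $\pi_K(a\mid s)$ on suboptimal actions thereafter, then close with the performance difference lemma. But two points in the execution differ from the paper's and, as written, contain a gap.

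First, your application of the performance difference lemma is in the wrong direction. With $\pi_1=\pi^\star,\ \pi_2=\pi_K$ in Lemma~\ref{lem:PDL_main} you get
\[
V^\star(\rho)-V^{\pi_K}(\rho)=\frac{1}{1-\gamma}\sum_{s,a}d_\rho^{\pi^\star}(s)\,\pi^\star(a\mid s)\,A^{\pi_K}(s,a);
\]
the action weights here are $\pi^\star(a\mid s)$, supported only on $\mathcal{A}_s^\star$, so $\pi_K(a\mid s)$ for suboptimal $a$ never enters and your claimed bound $\frac{1}{(1-\gamma)^2}\max_s\sum_{a\notin\mathcal{A}_s^\star}\pi_K(a\mid s)$ does not follow. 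The paper instead takes $\pi_1=\pi_K,\ \pi_2=\pi^\star$, obtaining weights $d_\rho^{\pi_K}(s)\,\pi_K(a\mid s)$ against the \emph{fixed} advantages $A^\star(s,a)$; those vanish for $a\in\mathcal{A}_s^\star$ and for $s\in\mathcal{S}_d$, and are bounded by $\frac{1}{1-\gamma}$ in magnitude otherwise, which cleanly yields the desired $\frac{1}{(1-\gamma)^2}\sum_{a\notin\mathcal{A}_s^\star}\pi_K(a\mid s)$.

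Second, for the decay itself the paper sidesteps the ``reference action'' difficulty you anticipate. Rather than tracking $\pi_k(a\mid s)/\pi_k(a'\mid s)$ and needing $Q^{\pi_j}(s,a)-Q^{\pi_j}(s,a')\le -(1-1/\lambda)\Delta$ for some $a'$ optimal for $\pi_j$, it applies Jensen's inequality to the log of the normalizer in \eqref{eq:mul-weight-update2} to obtain $\ln\pi_{k+1}(a\mid s)\le \ln\pi_k(a\mid s)+\eta A^{\pi_k}(s,a)$, and then bounds the advantage directly: monotonicity $Q^{\pi_k}\le Q^\star$ gives $A^{\pi_k}(s,a)-A^\star(s,a)\le V^\star(s)-V^{\pi_k}(s)\le C/k$ with $C=\frac{\log|\mathcal{A}|}{\eta}+\frac{1}{(1-\gamma)^2}$, so for $k\ge\kappa=\lambda C/\Delta$ one has $A^{\pi_k}(s,a)\le -(1-1/\lambda)\Delta$ for every $s\notin\mathcal{S}_d$, $a\notin\mathcal{A}_s^\star$. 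No reference action is needed, and the exact constant $\kappa$ drops out of the single inequality $C/k\le\Delta/\lambda$ without the delicate bookkeeping you were worried about.
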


Theorem \ref{thm:MDP} shows that the NPG algorithm with constant step size converges geometrically fast asymptotically, i.e.,
\[
V^\star(\rho)-V^{\pi_K}(\rho)\leq \mathcal{O}(e^{-K(1-1/\lambda)\eta\Delta}).
\]

Note that the upper bound in Theorem \ref{thm:MDP} is useful only for large enough values of $K$. For small values of  (especially $K\leq \kappa$), the trivial bound of $V^\star(\rho)-V^{\pi_K}(\rho)\leq V^\star(\rho) \leq  1/(1-\gamma)$ from Lemma \ref{lem:monotone_main} is tighter than the bound in Theorem \ref{thm:MDP}. One should use the bounds in \cite[Theorem 5.3]{agarwal2019theory} to get tighter nontrivial bounds for small values of $K$. However, while the asymptotic rate of convergence in \cite[Theorem 5.3]{agarwal2019theory} is  $\mathcal{O}(1/K)$, our result here shows an asymptotic geometric convergence. Another quality of the bound provided in Theorem \ref{thm:MDP} is its instance dependence \cite{martin2020statistics}. In particular, the bound in Theorem \ref{thm:MDP} depends on $\Delta$, which depends to the MDP instance under study. The same parameter $\Delta$ appears in instant dependant bounds established in \cite{Marjani2020Adaptive}, where it is denoted by $\delta_{\min}$. 

Note that when $\mathcal{S}_d=\mathcal{A}$, any policy is an optimal policy. In this case,  $\Delta =\infty$, and the theorem shows that we achieve the optimal policy in a single step. 

The key idea in proving the theorem is that the probability of suboptimal actions decays exponentially after time $K=\kappa$, and we identify such a $\kappa$ using the $O(1/k)$ convergence rate established in \cite{agarwal2019theory}.  We state this result in the following lemma. 
\begin{lemma}\label{lem:pi_k_conv}
Consider the policy $\pi_K$ generated by the NPG algorithm \ref{alg:NPG}. For any $\lambda > 1$, and for $K\geq \kappa$, where $\kappa$ is defined in Theorem \ref{thm:MDP}, we have
\[
\pi_K(a|s)\leq \pi_\kappa(a|s)e^{-(K-\kappa)(1-1/\lambda)\eta\Delta} ~~\forall s\notin \mathcal{S}_d, a\notin \mathcal{A}_s^\star.
\]
\end{lemma}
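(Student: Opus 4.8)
The plan is to iterate the NPG update from iteration $\kappa$ to $K$, bound the normalizing constants from below by Jensen's inequality, and then show that for every $k\ge\kappa$ the suboptimal advantages are bounded above by $-(1-1/\lambda)\Delta$. Concretely, iterating Eq.~\eqref{eq:mul-weight-update2} gives, for any state $s$ and any action $a$,
\[
\pi_K(a|s) = \pi_\kappa(a|s)\cdot\frac{\exp\!\big(\eta\sum_{k=\kappa}^{K-1}Q^{\pi_k}(s,a)\big)}{\prod_{k=\kappa}^{K-1}\big(\sum_{a'}\pi_k(a'|s)\exp(\eta Q^{\pi_k}(s,a'))\big)}.
\]
For each factor in the denominator, convexity of $x\mapsto e^{\eta x}$ together with Jensen's inequality yields $\sum_{a'}\pi_k(a'|s)\exp(\eta Q^{\pi_k}(s,a'))\ge \exp(\eta\sum_{a'}\pi_k(a'|s)Q^{\pi_k}(s,a'))=\exp(\eta V^{\pi_k}(s))$. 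Substituting this and recalling $A^{\pi_k}(s,a)=Q^{\pi_k}(s,a)-V^{\pi_k}(s)$ gives
\[
\pi_K(a|s)\le \pi_\kappa(a|s)\exp\!\Big(\eta\sum_{k=\kappa}^{K-1}A^{\pi_k}(s,a)\Big),
\]
so it suffices to prove that $A^{\pi_k}(s,a)\le -(1-1/\lambda)\Delta$ for all $k\ge\kappa$, $s\notin\mathcal{S}_d$, $a\notin\mathcal{A}^\star_s$.

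For this advantage bound I would write $A^{\pi_k}(s,a)-A^\star(s,a)=\big(Q^{\pi_k}(s,a)-Q^\star(s,a)\big)+\big(V^\star(s)-V^{\pi_k}(s)\big)$. Since $\pi^\star$ is optimal we have $V^{\pi_k}\le V^\star$ pointwise (by \cite{puterman1995markov}), and then the Bellman equation $Q^\star(s,a)=\mathbb{E}[\mathcal{R}(s,a)]+\gamma\sum_{s'}\mathcal{P}(s'|s,a)V^\star(s')$ (and the analogous one for $Q^{\pi_k}$) shows $Q^{\pi_k}(s,a)\le Q^\star(s,a)$. Hence the first bracket is non-positive, so by the definition of $\Delta$,
\[
A^{\pi_k}(s,a)\le A^\star(s,a)+\big(V^\star(s)-V^{\pi_k}(s)\big)\le -\Delta+\|V^\star-V^{\pi_k}\|_\infty.
\]

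It then remains to control $\|V^\star-V^{\pi_k}\|_\infty$, for which I would invoke the $\mathcal{O}(1/k)$ guarantee of \cite[Theorem 5.3]{agarwal2019theory}. The crucial point is that for NPG with softmax parameterization and uniform initialization (as in Algorithm~\ref{alg:NPG}) that bound holds for \emph{every} initial distribution $\rho$ with the constant $\frac{\log|\mathcal{A}|}{\eta}+\frac{1}{(1-\gamma)^2}$ and no distribution-mismatch factor; applying it with $\rho$ equal to the point mass at each state $s$ gives $\|V^\star-V^{\pi_k}\|_\infty\le\frac1k\big(\frac{\log|\mathcal{A}|}{\eta}+\frac{1}{(1-\gamma)^2}\big)$. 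For $k\ge\kappa=\frac{\lambda}{\Delta}\big[\frac{\log|\mathcal{A}|}{\eta}+\frac{1}{(1-\gamma)^2}\big]$ this is at most $\Delta/\lambda$, so $A^{\pi_k}(s,a)\le -(1-1/\lambda)\Delta$; summing the $K-\kappa$ terms in the exponent of the product bound above and exponentiating yields the claim. The main obstacle is precisely this last step: one must make sure the cited NPG rate is genuinely the state-wise ($\|\cdot\|_\infty$) statement, i.e.\ that it is proved for arbitrary $\rho$ with the stated constant rather than only for the particular $\rho$ in the outer theorem; once that is granted, the rest is elementary. Minor bookkeeping points to handle are the vacuous case $\mathcal{S}_d=\mathcal{S}$ (then $\Delta=\infty$ and the index set is empty) and treating $\kappa$ as $\lceil\kappa\rceil$ wherever an integer iteration index is needed.
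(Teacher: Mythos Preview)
Your proposal is correct and follows essentially the same route as the paper's proof: apply Jensen's inequality to the NPG update (the paper takes $\ln$ and uses concavity, you use convexity of $\exp$ on the normalizer---equivalent statements) to obtain $\pi_K(a|s)\le\pi_\kappa(a|s)\exp\big(\eta\sum_{k=\kappa}^{K-1}A^{\pi_k}(s,a)\big)$, then bound $A^{\pi_k}(s,a)\le -(1-1/\lambda)\Delta$ for $k\ge\kappa$ via $Q^{\pi_k}\le Q^\star$ together with the $\mathcal{O}(1/k)$ rate of \cite[Theorem~5.3]{agarwal2019theory} applied with $\rho$ a point mass. Your flagged concern is exactly the right one, and the paper handles it the same way (Lemma~\ref{lem:alekh_agarwal} is stated for \emph{any} initial distribution $\rho$, so taking $\rho=\delta_s$ gives the $\|\cdot\|_\infty$ bound); the only cosmetic difference is that the paper obtains $Q^{\pi_k}\le Q^\star$ from the monotonicity of NPG iterates (Lemma~\ref{lem:monotone}) rather than directly from $V^{\pi_k}\le V^\star$ and the Bellman equation as you do.
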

Proof of Lemma \ref{lem:pi_k_conv} is provided in Appendix \ref{sec:MDP_proof}.
We now present the proof of the Theorem \ref{thm:MDP}. 

\begin{proof}[Proof of Theorem \ref{thm:MDP}]
We have
\begin{align*}
V^\star(\rho)-V^{\pi_K}(\rho) =& -(V^{\pi_K}(\rho)-V^\star(\rho))\\
=&-\frac{1}{1-\gamma}\sum_{s}d_\rho^{\pi_k}(s)\sum_a\pi_k(a|s)A^\star(s,a)\tag{By the performance difference lemma \ref{lem:PDL_main}}\\
=&-\frac{1}{1-\gamma}\left[\sum_{s\in\mathcal{S}_d}d_\rho^{\pi_k}(s)\sum_a\pi_k(a|s)A^\star(s,a)+\sum_{s\notin\mathcal{S}_d}d_\rho^{\pi_k}(s)\sum_a\pi_k(a|s)A^\star(s,a)\right]\\
=&-\frac{1}{1-\gamma}\sum_{s\notin\mathcal{S}_d}d_\rho^{\pi_k}(s)\sum_a\pi_k(a|s)A^\star(s,a)\tag{By definition of $\mathcal{S}_d$}\\
=&-\frac{1}{1-\gamma}\sum_{s\notin\mathcal{S}_d}d_\rho^{\pi_k}(s)\sum_{a\notin\mathcal{A}_s^\star}\pi_k(a|s)A^\star(s,a)\tag{By definition of $\mathcal{A}_s^\star$}\\
\leq&\frac{1}{(1-\gamma)^2}\sum_{s\notin\mathcal{S}_d}d_\rho^{\pi_k}(s)\sum_{a\notin\mathcal{A}_s^\star}\pi_k(a|s)\tag{By Lemma \ref{lem:monotone_main}}\\
\leq&\frac{1}{(1-\gamma)^2}\sum_{s\notin\mathcal{S}_d}d_\rho^{\pi_k}(s)\sum_{a\notin\mathcal{A}_s^\star}\pi_\kappa(a|s)e^{-(K-\kappa)(1-1/\lambda)\eta\Delta}\tag{For $K\geq \kappa$, and by Lemma \ref{lem:pi_k_conv}}\\
\leq&\frac{1}{(1-\gamma)^2}e^{-(K-\kappa)(1-1/\lambda)\eta\Delta}\tag{For $K\geq \kappa$}.
\end{align*}
Furthermore, for $K=\kappa$, the above upper bound is equal to $\frac{1}{(1-\gamma)^2}$. This is larger than the maximum possible value of the value function, i.e. $\frac{1}{1-\gamma}$ due to Lemma \ref{lem:monotone_main}. Hence, this bound holds for all $K\geq0$.
\end{proof}

In order to illustrate that the  the tightness of our bound, and dependency of the rate to $\Delta$, the following proposition presents a lower bound on the convergence rate for a single state MDP. 

\begin{proposition}\label{prop:lower_bound}
For an MDP with a single state, the convergence of NPG with constant step size is lower bounded as follows:
\[
V^\star-V^{\pi_K}\geq \frac{\Delta}{(1-\gamma)|\mathcal{A}|}e^{-\eta \Delta K}.
\]
\end{proposition}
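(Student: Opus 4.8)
The plan is to exploit the fact that a single-state MDP collapses the NPG update into a plain exponentiated-gradient (mirror descent on the simplex) recursion, and then to lower bound the suboptimality by tracking the weight that NPG keeps on a fixed suboptimal action. With one state, $Q^{\pi_k}(a)$ and $A^{\pi_k}(a)$ depend only on the action, and since there is a single recurrent state, $d_\rho^{\pi_k}\equiv 1$, so the performance difference lemma (Lemma \ref{lem:PDL_main}) gives $V^\star - V^{\pi_K} = \frac{1}{1-\gamma}\sum_{a\notin\mathcal{A}^\star}\pi_K(a)(-A^\star(a))\geq \frac{\Delta}{1-\gamma}\sum_{a\notin\mathcal{A}^\star}\pi_K(a)$, using $-A^\star(a)\geq \Delta$ — wait, we need the \emph{smallest} gap, so more carefully $-A^\star(a)$ could be as large as $1/(1-\gamma)$; but to get a lower bound on the sum we can instead pick one specific worst action $a^\dagger \in \argmax_{a\notin\mathcal{A}^\star} A^\star(a)$, for which $-A^\star(a^\dagger)=\Delta$, and bound $V^\star-V^{\pi_K}\geq \frac{\Delta}{1-\gamma}\pi_K(a^\dagger)$. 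So the whole task reduces to showing $\pi_K(a^\dagger)\geq \frac{1}{|\mathcal{A}|}e^{-\eta\Delta K}$.

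The next step is to control the growth of the normalization. Writing the NPG update out, $\pi_{K}(a^\dagger) = \pi_0(a^\dagger)\,\frac{\exp(\eta\sum_{k=0}^{K-1}Q^{\pi_k}(a^\dagger))}{\prod_{k=0}^{K-1}Z_k}$ where $Z_k=\sum_a \pi_k(a)\exp(\eta Q^{\pi_k}(a))$. Since $Z_k=\mathbb{E}_{a\sim\pi_k}[\exp(\eta Q^{\pi_k}(a))]\leq \exp(\eta \max_a Q^{\pi_k}(a)) = \exp(\eta Q^{\pi_k}(a^\star_k))$ where $a^\star_k$ is an optimal action for $\pi_k$, we get $\prod_k Z_k \leq \exp(\eta\sum_k \max_a Q^{\pi_k}(a))$. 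Therefore
\[
\pi_K(a^\dagger) \geq \pi_0(a^\dagger)\exp\!\Big(\eta\sum_{k=0}^{K-1}\big(Q^{\pi_k}(a^\dagger)-\max_a Q^{\pi_k}(a)\big)\Big) = \frac{1}{|\mathcal{A}|}\exp\!\Big(\eta\sum_{k=0}^{K-1}A^{\pi_k}(a^\dagger)\Big),
\]
using $\pi_0(a^\dagger)=1/|\mathcal{A}|$ and $A^{\pi_k}(a)=Q^{\pi_k}(a)-V^{\pi_k}=Q^{\pi_k}(a)-\mathbb{E}_{\pi_k}[Q^{\pi_k}]$... actually $\max_a Q^{\pi_k}(a)$ is not $V^{\pi_k}$, so I'd keep it as $Q^{\pi_k}(a^\dagger)-\max_a Q^{\pi_k}(a)$, which is $\geq -1/(1-\gamma)$ by Lemma \ref{lem:monotone_main}.

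The remaining obstacle — and the one I expect to require the actual argument — is to replace the crude bound $Q^{\pi_k}(a^\dagger)-\max_a Q^{\pi_k}(a)\geq -1/(1-\gamma)$ (which would only give $e^{-\eta K/(1-\gamma)}$, too weak) by something of order $-\Delta$. The clean way is to show that $Q^{\pi_k}(a^\dagger) - \max_a Q^{\pi_k}(a) \geq A^\star(a^\dagger) = -\Delta$ for all $k$, i.e. that the gap between action $a^\dagger$ and the best action, measured under \emph{any} policy's $Q$-function, is never worse than it is under the optimal policy. In a single-state MDP one can make this precise: $Q^\pi(a) = \mathcal{R}(a) + \gamma V^\pi$ for a scalar $V^\pi$, so $Q^\pi(a)-Q^\pi(a')=\mathcal{R}(a)-\mathcal{R}(a')$ is independent of $\pi$; hence $Q^{\pi_k}(a^\dagger)-\max_a Q^{\pi_k}(a) = Q^\star(a^\dagger)-\max_a Q^\star(a) = A^\star(a^\dagger) = -\Delta$ exactly. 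Plugging this in gives $\pi_K(a^\dagger)\geq \frac{1}{|\mathcal{A}|}e^{-\eta\Delta K}$, and combining with $V^\star - V^{\pi_K}\geq\frac{\Delta}{1-\gamma}\pi_K(a^\dagger)$ yields the claimed bound $\frac{\Delta}{(1-\gamma)|\mathcal{A}|}e^{-\eta\Delta K}$. I would double-check the edge cases: if $\mathcal{A}^\star=\mathcal{A}$ then $\mathcal{S}_d=\mathcal{S}$, $\Delta=\infty$, and both sides are $0$ (interpreting $\infty\cdot 0$ appropriately) so the statement is vacuous; otherwise $a^\dagger$ exists and $\Delta<\infty$, and the argument above goes through verbatim.
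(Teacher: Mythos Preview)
Your proposal is correct and follows essentially the same argument as the paper. Both hinge on the key single-state observation that $Q^{\pi}(a)=r(a)+\gamma V^{\pi}$, so $Q^{\pi_k}(a)-Q^{\pi_k}(a')=r(a)-r(a')$ is policy-independent; the paper packages this as the closed form $\pi_K(a)=e^{\eta K r(a)}/\sum_b e^{\eta K r(b)}$ and then bounds numerator and denominator, whereas you track the normalizers $Z_k\le e^{\eta\max_a Q^{\pi_k}(a)}$ step by step, but the resulting bound and the logic are identical.
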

Proposition \ref{prop:lower_bound} shows that the bound in Theorem \ref{thm:MDP} is tight in terms of $\Delta$. In particular, by choosing $\lambda$ large enough in Theorem \ref{thm:MDP}, we can get an arbitrary close geometric rate to the lower bound in \ref{prop:lower_bound}. The proposition is proved in Theorem \ref{thm:bandit} in Appendix \ref{sec:Bandit}, along with an alternate proof of geometric convergence of NPG in a single state MDP, by directly studying the evolution of $V^\star-V^{\pi_k}$ as a Lyapunov function. 

\subsubsection{Terminated variant of Natural Policy Gradient with constant step size}
In an MDP where $\Delta$ is known, one can run Algorithm \ref{alg:TNPG} in order to find the optimal policy.

\begin{algorithm}[h]\caption{Terminated NPG with constant step size}\label{alg:TNPG}
\begin{algorithmic}[1] 
    \STATE {\bfseries Input:} The step size $\eta$, The optimal advantage function gap $\Delta$, constant $\lambda > 1$
	\STATE {\bfseries Initialization:} $\pi_0(a|s)=\frac{1}{|\mathcal{A}|}~\forall a,s$
	\STATE $\kappa=\frac{\lambda}{\Delta}\left[\frac{\log(|\mathcal{A}|)}{\eta}+\frac{1}{(1-\gamma)^2}\right]$
	\STATE $\pi_\kappa=\text{NPG with constant step size}(\eta,\kappa)$
	\STATE $\mathcal{C}_s=\argmax_a A^{\pi_\kappa}(s,a)\quad\forall s$, if more than one maximizer, choose randomly.
	\STATE $\pi_{O}(a|s)=\begin{cases}
	1&\text{if}~~ a=\mathcal{C}_s\\
	0&\text{o.w.}
	\end{cases}$
	\STATE\textbf{Output:} $\pi_O$
\end{algorithmic}
\end{algorithm}
The following proposition gurrantees the convergence of this algorithm. 
\begin{proposition}\label{prop:terminated}
Suppose the advantage function gap $\Delta$ is known for an MDP. Consider Algorithm \ref{alg:TNPG} with $\Delta$ as an input. The output of this algorithm is the optimal policy, i.e. $\pi_O=\pi^\star$.
\end{proposition}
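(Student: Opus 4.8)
The plan is to show that after running NPG for $\kappa$ iterations, the advantage function $A^{\pi_\kappa}$ already reveals the optimal action set at every non-dummy state, so that the greedy step in line 5 of Algorithm \ref{alg:TNPG} produces an optimal policy. First I would invoke Lemma \ref{lem:pi_k_conv} with $K = \kappa$ — but note this gives nothing directly at $K=\kappa$ (the bound is trivially $\pi_\kappa(a|s) \le \pi_\kappa(a|s)$), so instead the real workhorse should be the $\mathcal{O}(1/K)$ bound of \cite[Theorem 5.3]{agarwal2019theory} evaluated at $K=\kappa$: by the very choice $\kappa=\frac{\lambda}{\Delta}\left[\frac{\log|\mathcal{A}|}{\eta}+\frac{1}{(1-\gamma)^2}\right]$, one obtains $V^\star(\rho)-V^{\pi_\kappa}(\rho)$ small relative to $\Delta$. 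I would then combine this with the performance difference lemma (Lemma \ref{lem:PDL_main}) to control $\sum_{a\notin\mathcal{A}^\star_s}\pi_\kappa(a|s)$ at each non-dummy $s$ (using that $d_\rho^{\pi_\kappa}(s)\ge(1-\gamma)\rho(s)$ and a lower bound on $\rho(s)$, or more cleanly by using the state-wise convergence estimate inside the proof of Lemma \ref{lem:pi_k_conv}), so that $\pi_\kappa$ assigns probability strictly greater than $1/2$ (in fact arbitrarily close to $1$) to the set $\mathcal{A}^\star_s$.

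Next I would translate this control on $\pi_\kappa$ into a statement about $A^{\pi_\kappa}(s,a)$. The point is that $\pi_\kappa$ is close to some policy supported on optimal actions, hence $V^{\pi_\kappa}$ is close to $V^\star$, hence $Q^{\pi_\kappa}$ is close to $Q^\star$ uniformly (standard perturbation bounds: $\|Q^{\pi_\kappa}-Q^\star\|_\infty \le \frac{\gamma}{1-\gamma}\|V^{\pi_\kappa}-V^\star\|_\infty$ plus the direct contribution of the policy difference). Since the true advantage gap is $\Delta$, i.e.\ $Q^\star(s,a^\star)-Q^\star(s,a)\ge\Delta$ for $a^\star\in\mathcal{A}^\star_s$, $a\notin\mathcal{A}^\star_s$, $s\notin\mathcal{S}_d$, once the uniform error $\|Q^{\pi_\kappa}-Q^\star\|_\infty$ is below $\Delta/2$ we get $Q^{\pi_\kappa}(s,a^\star) > Q^{\pi_\kappa}(s,a)$, and the same inequality for $A^{\pi_\kappa}=Q^{\pi_\kappa}-V^{\pi_\kappa}$. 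Therefore $\argmax_a A^{\pi_\kappa}(s,a)\subseteq \mathcal{A}^\star_s$ for every $s\notin\mathcal{S}_d$, and for $s\in\mathcal{S}_d$ every action is optimal, so the deterministic policy $\pi_O$ from lines 5–6 selects an optimal action in every state.

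Finally I would conclude that a deterministic policy that is greedy with respect to $Q^\star$ (equivalently, picks an action in $\mathcal{A}^\star_s$ at every state) is itself an optimal policy; this is the standard MDP fact, citable as in Lemma \ref{lem:delta_positive} via \cite[Theorem 5.5.3]{puterman1995markov}. Hence $\pi_O = \pi^\star$. The main obstacle I anticipate is making the quantitative chain tight enough: one must verify that the specific constant $\kappa$ chosen in the algorithm, when plugged into the \cite{agarwal2019theory} $\mathcal{O}(1/K)$ rate, really does drive $\|Q^{\pi_\kappa}-Q^\star\|_\infty$ (not just the $\rho$-weighted value gap) strictly below $\Delta/2$ at every state; this requires either a distribution-mismatch argument relating $d_\rho^{\pi_\kappa}$ to the uniform norm, or extracting the per-state probability bound already implicit in the proof of Lemma \ref{lem:pi_k_conv}, which is cleaner and is the route I would take.
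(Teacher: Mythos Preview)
Your proposal reaches the right destination but by a longer road than the paper's, and with one quantitative soft spot.

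\textbf{What the paper does.} The paper's proof is a two-liner. It observes that the \emph{intermediate} inequality established inside the proof of Lemma~\ref{lem:pi_k_conv} (equation~\eqref{eq:Apitbound}), namely
\[
A^{\pi_k}(s,a)\ \le\ -(1-1/\lambda)\,\Delta\ <\ 0\qquad\forall\, s\notin\mathcal{S}_d,\ a\notin\mathcal{A}^\star_s,\ k\ge\kappa,
\]
already holds at $k=\kappa$, even though the \emph{conclusion} of that lemma is trivial there. Combining this with the zero-mean identity $\sum_a \pi_\kappa(a|s)A^{\pi_\kappa}(s,a)=0$ forces $\max_a A^{\pi_\kappa}(s,a)\ge 0$, so the maximizer cannot lie outside $\mathcal{A}^\star_s$. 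That is the whole proof.

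\textbf{What you do differently.} You propose to pass through $\pi_\kappa\to V^{\pi_\kappa}\to Q^{\pi_\kappa}$ and then compare $Q^{\pi_\kappa}$ to $Q^\star$ via a two-sided perturbation bound $\|Q^{\pi_\kappa}-Q^\star\|_\infty<\Delta/2$. This detour is unnecessary: the per-state $\mathcal{O}(1/k)$ bound (Lemma~\ref{lem:alekh_agarwal} with $\rho=\delta_s$) together with $Q^{\pi_k}\le Q^\star$ (Lemma~\ref{lem:monotone}) gives $A^{\pi_\kappa}(s,a)\le A^\star(s,a)+\Delta/\lambda$ \emph{directly}, bypassing any need to bound $\pi_\kappa$ or $\|Q^{\pi_\kappa}-Q^\star\|_\infty$. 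You do say at the end that extracting the per-state bound from the proof of Lemma~\ref{lem:pi_k_conv} is ``the route I would take,'' which is exactly right; but the object to extract is the \emph{advantage} bound \eqref{eq:Apitbound}, not a probability bound on $\pi_\kappa$.

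\textbf{The soft spot.} Your stated criterion $\|Q^{\pi_\kappa}-Q^\star\|_\infty<\Delta/2$ is not guaranteed for all $\lambda>1$: from $\|V^\star-V^{\pi_\kappa}\|_\infty\le \Delta/\lambda$ one gets $\|Q^\star-Q^{\pi_\kappa}\|_\infty\le \gamma\Delta/\lambda$ (your $\tfrac{\gamma}{1-\gamma}$ factor and ``policy difference'' term are spurious here), and $\gamma/\lambda<1/2$ can fail when $\lambda\in(1,2\gamma]$. This is repairable by using the one-sided inequality $Q^{\pi_\kappa}\le Q^\star$ from Lemma~\ref{lem:monotone}, which gives $Q^{\pi_\kappa}(s,a^\star)-Q^{\pi_\kappa}(s,a)\ge \Delta(1-\gamma/\lambda)>0$ for any $\lambda>1$; but the paper's route avoids the issue entirely.
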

Assuming $\Delta$ in known, Algorithm \ref{alg:TNPG} can find the global optimal policy in a finite number of steps $\kappa=\frac{\lambda}{\Delta}\left[\frac{\log(|\mathcal{A}|)}{\eta}+\frac{1}{(1-\gamma)^2}\right]$. Picking the step size $\eta$ to be $\mathcal{O}(\log(|\mathcal{A}|)$, and $\lambda$ close to 1, we have that this algorithm needs $\mathcal{O}(\frac{1}{\Delta(1-\gamma)^2})$ iterations. To the best of our knowledge, the best upper bound on the iterations of the Howard PI to achieve a global optimal policy is $\mathcal{O}(\frac{|\mathcal{S}||\mathcal{A}|}{1-\gamma}\log(\frac{1}{1-\gamma}))$ which is characterized in \cite{scherrer2016improved}. 

\subsection{Adaptive step size Natural Policy Gradient}\label{sec:adaptive_theoretical}
In Theorem \ref{thm:MDP}, we showed that NPG algorithm converges geometrically fast asymptotically. In this section, we present a variant of the NPG algorithm which enjoys a linear rate of convergence throughout. In particular, we provide a method of increasing the step size in an adaptive manner which helps the algorithm to converge linearly from the beginning.

The idea is to approximate NPG with PI. As mentioned in \cite{bhandari2020note}, when the step size for NPG approaches to infinity, the algorithm itself converges to PI, which is known to have a linear rate of convergence. The idea can be illustrated as follows. Let $\pi$ be the current policy and consider $\mathcal{A}^{\pi}_s$ as defined in \eqref{eq:Optimal_action_set}. For the sake of the argument, assume that $|\mathcal{A}^{\pi}_s| =1$. Denote the policy obtained by one iteration of NPG with step size $\eta$ as $\pi^\eta$.
Then, as $\eta \rightarrow \infty$, 
\begin{align*}
    \pi^{\eta}(a|s) \rightarrow
    \begin{cases}
    1 &\textit{if} \quad a \in \mathcal{A}^{\pi}_s \\
    0 &\textit{otherwise.}
    \end{cases}
\end{align*}
This is equivalent to the PI update on $\pi$. Furthermore, if $|\mathcal{A}^{\pi}_s|$ is greater than 1, then as $\eta\rightarrow \infty$, $\pi^{\eta}(\cdot|s)$ converges to a distribution with support of $\mathcal{A}^{\pi}_s$, which is also equivalent to PI.

Inspired by the above observation, we develop an adaptive step size variant of NPG. This algorithm adaptively chooses a step size large enough to achieve an update close to the update of PI, and hence obtain a linear rate of convergence. Regarding the previous work, closest algorithm to our adaptive step size method is proposed in \cite{bhandari2020note}. In this work, the authors propose employment of line search in order to obtain the step-size which guarantees that the NPG update is at least as good as the PI update. A down-side of this approach is that it may be computationally infeasible to find the step size by line search. In this paper, however, we provide a simple adaptive rule to choose the step-size. Similar to optimal advantage function gap in Definition \ref{def:optimality_gap}, for any policy $\pi$ we define the gap in the Q-function as follows:
\begin{definition}
\label{def:delta_state}
For any policy $\pi$ and state $s\in \mathcal{S}$, the gap in the $Q$-function $\Delta^\pi(s)$ is defined as follows:
\[
\Delta^\pi(s) = \max_{a \in \mathcal{A}} Q^\pi(s,a) -\max_{a\notin \mathcal{A}^\pi_s} Q^\pi(s,a),
\]
where $\mathcal{A}^\pi_s$ is defined in \eqref{eq:Optimal_action_set}. In the case that $\mathcal{A}^{\pi}_s=\mathcal{A}$, we take $\max_{a\notin \mathcal{A}^\pi_s} Q^\pi(s,a)=-\infty$. 
\end{definition}

Note that by the definition of $\Delta^{\pi}$, we have $\Delta^{\pi}(s)>0$ for any $s \in \mathcal{S}$. Using the definition of $\Delta^{\pi_k}$, at time step $k$ we take the step size $\eta_k$ as follows:
\begin{equation}
    \eta_k \geq \max_{s \in S, a \in \mathcal{A}^{\pi_k}_s}\left\{\left(L_k+\log\left(\frac{|\mathcal{A}|}{\pi_k(a|s)}\right)\right)\frac{1}{\Delta^{\pi_k}(s)}\right\}, \label{eq: adaptive_step_size}
\end{equation}
where $L_k>0$ is a function of $k$. Later in this section we will consider two cases $L_k=Lk$ and $L_k=L$ separately. Using the step size $\eta_k$ we run NPG in an adaptive manner, as presented in Algorithm \ref{alg:NPG_adaptive}.

\begin{algorithm}[h]\caption{NPG with adaptive step size}\label{alg:NPG_adaptive}
\begin{algorithmic}[1] 
	\STATE {\bfseries Initialization:} $\pi_0(a|s)=\frac{1}{|\mathcal{A}|}~\forall a,s$
	\FOR{$k=0,1,\dots,K-1$}
	\vspace{2mm}
	\STATE \quad Calculate $Q^{\pi_k}(s,a)~\forall a,s$
	\STATE \quad Pick $\eta_k$ such that
	\begin{equation*}
	    \eta_k \geq \left\{\left(L_k+\log\left(\frac{|\mathcal{A}|}{\pi_k(a|s)}\right)\right)\frac{1}{\Delta^{\pi_k}(s)}\right\}
	\end{equation*}
	\quad for all $ s \in S$ and $ a \in \mathcal{A}^{\pi_k}_s$
	\vspace{2mm}
	\STATE \quad$\pi_{k+1}(a|s)=\frac{\pi_k(a|s)\exp(\eta_k Q^{\pi_k}(s,a))}{\sum_a \pi_k(a|s)\exp(\eta_k Q^{\pi_k}(s,a))}~\forall a,s$
	\vspace{2mm}
	\ENDFOR
	\STATE\textbf{Output:} $\pi_K$
\end{algorithmic}
\end{algorithm}

\subsubsection{Geometric convergence with Adaptive step-size}
Now, we present the theorem which shows that using the  adaptive step-size, NPG converges linearly.
\begin{theorem}
\label{thm: npg_adaptive}
Consider the NPG algorithm as given in Algorithm \ref{alg:NPG_adaptive} with $k$ iterations and with adaptive step size $\eta_k$ as given by Eq. \eqref{eq: adaptive_step_size}, then

\begin{enumerate}
    \item For $L_k=Lk$ for some $L>0$, we have
    \begin{align}\label{eq:Adaptive_bound}
        V^{\star}(\rho)-V^{\pi_k}(\rho) \leq \gamma^k & (V^{\star}(\rho)-V^{\pi_{0}}(\rho))+\frac{1}{1-\gamma} \beta(k)\max\left\{\gamma^{k},\exp\{-Lk\}\right\},
    \end{align}
    where 
    \begin{equation*}
    \beta(k)=\begin{cases}
    \frac{1}{|\gamma-\exp\{-L\}|} &\textit{if } L \neq -\log(\gamma) \\
    k\exp\{L\} &\textit{otherwise}.
    \end{cases}
    \end{equation*}
    
    \item For $L_k =L$ for all $k\geq 0$, then,
    \begin{align}
    \label{eq: adaptive_noninc_bound}
        |V^{\pi^\star}(\rho)-V^{\pi_k}(\rho)| \leq \gamma^k & |V^{\pi^\star}(\rho)-V^{\pi_{0}}(\rho)|+\frac{1}{(1-\gamma)^2}\exp\big(-L\big).
    \end{align}
\end{enumerate}

\end{theorem}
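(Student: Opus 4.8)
The plan is to make rigorous the heuristic stated before the theorem: an NPG step with a large step size is an approximate policy-iteration (PI) step, and the adaptive rule \eqref{eq: adaptive_step_size} quantifies the quality of the approximation. The first step is a \emph{leakage} estimate on the NPG iterate. Fix $k$ and a state $s$; for any suboptimal $a\notin\mathcal{A}^{\pi_k}_s$ and any $b\in\mathcal{A}^{\pi_k}_s$, dropping all but the $b$-term from the normalizing sum in \eqref{eq:mul-weight-update2} and using $Q^{\pi_k}(s,b)-Q^{\pi_k}(s,a)\ge\Delta^{\pi_k}(s)$ (Definition \ref{def:delta_state}) gives $\pi_{k+1}(a|s)\le \pi_k(a|s)\,\pi_k(b|s)^{-1}e^{-\eta_k\Delta^{\pi_k}(s)}$. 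The rule \eqref{eq: adaptive_step_size} is exactly what forces $\eta_k\Delta^{\pi_k}(s)\ge L_k+\log(|\mathcal{A}|/\pi_k(b|s))$ for a maximizing $b$, so that $\pi_{k+1}(a|s)\le e^{-L_k}/|\mathcal{A}|$ and, summing over the at most $|\mathcal{A}|$ suboptimal actions, $\sum_{a\notin\mathcal{A}^{\pi_k}_s}\pi_{k+1}(a|s)\le e^{-L_k}$ for every $s$.

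Next I would introduce the reference PI iterate $\hat\pi_{k+1}$: the greedy policy with respect to $Q^{\pi_k}$ obtained by restricting $\pi_{k+1}(\cdot|s)$ to $\mathcal{A}^{\pi_k}_s$ and renormalizing (a legitimate greedy policy, since $\mathcal{A}^{\pi_k}_s=\argmax_a Q^{\pi_k}(s,a)$ by \eqref{eq:Optimal_action_set}). The leakage bound gives $\|\hat\pi_{k+1}(\cdot|s)-\pi_{k+1}(\cdot|s)\|_1\le 2e^{-L_k}$ at every state, and combining the performance difference lemma (Lemma \ref{lem:PDL_main}) with the boundedness of the advantage function (Lemma \ref{lem:monotone_main}) turns this into a value perturbation bound $|V^{\hat\pi_{k+1}}(\rho)-V^{\pi_{k+1}}(\rho)|\le C\,e^{-L_k}/(1-\gamma)$ for an absolute constant $C$. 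On the other hand, $\hat\pi_{k+1}$ is a genuine PI update, so $V^{\hat\pi_{k+1}}\ge T V^{\pi_k}$ (monotonicity of $T^{\hat\pi_{k+1}}$ together with $T^{\hat\pi_{k+1}}V^{\pi_k}=TV^{\pi_k}\ge V^{\pi_k}$), whence $V^{\star}-V^{\hat\pi_{k+1}}\le TV^{\star}-TV^{\pi_k}\le\gamma\|V^{\star}-V^{\pi_k}\|_\infty$, so that (using the monotone-improvement property of both PI and NPG to keep the bound anchored at $\rho$) $V^{\star}(\rho)-V^{\hat\pi_{k+1}}(\rho)\le\gamma\big(V^{\star}(\rho)-V^{\pi_k}(\rho)\big)$. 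Writing $e_k:=V^{\star}(\rho)-V^{\pi_k}(\rho)\ge0$ and adding the two pieces yields the one-step recursion
\[
e_{k+1}\ \le\ \gamma\,e_k+\frac{C}{1-\gamma}\,e^{-L_k}.
\]

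It then remains to unroll this recursion, which is where the two parts of Theorem \ref{thm: npg_adaptive} split. For part 1, substituting $L_k=Lk$ and iterating gives $e_k\le\gamma^k e_0+\frac{C}{1-\gamma}\sum_{j=0}^{k-1}\gamma^{k-1-j}e^{-Lj}$; the sum is a finite geometric series with ratio $e^{-L}/\gamma$, equal to $\frac{\gamma^{k-1}\big((e^{-L}/\gamma)^k-1\big)}{e^{-L}/\gamma-1}$ when $L\neq-\log\gamma$ and to $k\gamma^{k-1}$ when $e^{-L}=\gamma$. Bounding this by $\frac{1}{|\gamma-e^{-L}|}\max\{\gamma^k,e^{-Lk}\}$ in the first case and by $k\,e^{L}\gamma^k$ in the second recovers exactly the factor $\beta(k)\max\{\gamma^k,\exp(-Lk)\}$ in \eqref{eq:Adaptive_bound}. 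For part 2, substituting $L_k=L$ and iterating gives $e_k\le\gamma^k e_0+\frac{C}{1-\gamma}e^{-L}\sum_{j=0}^{k-1}\gamma^j\le\gamma^k e_0+\frac{1}{(1-\gamma)^2}e^{-L}$, which is \eqref{eq: adaptive_noninc_bound} (the absolute values there are cosmetic since $V^{\star}\ge V^{\pi_k}$ pointwise).

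The main obstacle is the middle step — converting ``$\pi_{k+1}$ leaks at most $e^{-L_k}$ mass at every state'' into a one-step value recursion with the constants that make the unrolling match the stated bounds. Two points need care: the reference PI policy must be chosen so that the single leakage parameter simultaneously controls the suboptimal mass and the redistribution among optimal actions, so that $V^{\hat\pi_{k+1}}$ and $V^{\pi_{k+1}}$ are genuinely $O(e^{-L_k})$-close; and the contraction step is most natural in the sup-norm, so passing to the $\rho$-anchored statement without an extra distribution-mismatch factor relies on the monotone-improvement property of both iterations (which also gives $e_k\ge0$, allowing the recursion to be iterated). The unrolling itself is routine except at the threshold $L=-\log\gamma$, where the geometric series degenerates into a linear-in-$k$ term, producing the $k\exp(L)$ branch of $\beta(k)$.
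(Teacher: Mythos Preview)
Your proposal is correct and follows essentially the same route as the paper: a leakage estimate showing $\pi_{k+1}(a|s)\le e^{-L_k}/|\mathcal{A}|$ for $a\notin\mathcal{A}^{\pi_k}_s$ (the paper's Lemma~\ref{lemma: npg_pi}, Case~1), comparison to a reference PI iterate via the performance difference lemma, the PI $\gamma$-contraction, and then the same geometric-sum unrolling. The only cosmetic difference is that the paper takes its reference PI policy $\tilde\pi_{k+1}$ to be $\pi_k$ restricted to $\mathcal{A}^{\pi_k}_s$ and renormalized (Eq.~\eqref{eq: PI_update}) rather than $\pi_{k+1}$ restricted, and it carries out a two-case $\sup_{s,a}|\pi_{k+1}-\tilde\pi_{k+1}|$ bound to obtain the constant $C=1$ exactly; your $\ell_1$ argument with an unspecified $C$ would produce the same recursion up to a benign constant.
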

Intuitively, the first term in the right hand side of Eq. \eqref{eq:Adaptive_bound} or \eqref{eq: adaptive_noninc_bound} appears due to the gap from the optimum after $k$ updates of PI and the second term is the cumulative error introduced in $k$ steps due to the gap between NPG update and PI update. Note that in Eq. \eqref{eq:Adaptive_bound}, $\gamma^k \geq \exp\{-Lk\}$ if we choose $L \geq -\log \gamma$ and in this case, the rate of convergence is very similar to that of PI. Due to this, later in Section \ref{sec:exp}, we choose $L = -\log \gamma$ for simulations. Eq. \eqref{eq: adaptive_noninc_bound} shows that if we run NPG with an adaptive step size as in \eqref{eq: adaptive_step_size} with constant parameter $L$, NPG will converge linearly to a ball around the optimum  with radius $\mathcal{O}(\exp(-L))$. By tuning the parameter $L$ in the step-size we can control the final error.

We would like to point out that we achieve similar bounds as in \cite{bhandari2020note} without using line search to choose the step-size for the NPG update. This is an attractive property in practice as line search may be computationally inefficient. Thus, each iteration of the algorithm presented in \cite{bhandari2020note} will take more time to implement.

To prove Theorem \ref{thm: npg_adaptive}, first we will bound the gap between PI update and NPG update. In particular, let $\Tilde{\pi}_{k+1}$ be the policy obtained after one iteration of PI on $\pi_k$. By definition, the PI step will choose a policy with support $\mathcal{A}_s^\pi$. We pick $\Tilde{\pi}_{k+1}$ as the following specific distribution over $\mathcal{A}_s^\pi$:
\begin{align}
    \Tilde{\pi}_{k+1}(a|s)=
    \begin{cases}
    \frac{\pi_k(a|s)} {\displaystyle\sum_{a \in \mathcal{A}^{\pi_k}_s} \pi_k(a|s)} &\textit{if} \quad a \in \mathcal{A}^{\pi_k}_s \\
    0 &\textit{otherwise.}
    \end{cases} \label{eq: PI_update}
\end{align}
The reason to pick the specific update of the PI in \eqref{eq: PI_update} is that the NPG update converges to the policy given by Eq. \eqref{eq: PI_update} as the step-size increases to infinity. Now, we present the difference between value function corresponding to the NPG update and the PI update in the following lemma:
 \begin{lemma}
\label{lemma: npg_pi}
Consider the NPG as given in Algorithm \ref{alg:NPG_adaptive} with adaptive step size $\eta_k$ as given by Eq. \eqref{eq: adaptive_step_size}. Then,
\begin{equation*}
    \left| V^{\pi_{k+1}}(\rho) - V^{\Tilde{\pi}_{k+1}}(\rho) \right | \leq \frac{1}{1-\gamma} \exp\{-L_{k}\}.
\end{equation*}
\end{lemma}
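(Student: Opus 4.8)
The plan is to bound the value-function difference between the NPG update $\pi_{k+1}$ and the policy-iteration update $\tilde\pi_{k+1}$ via the performance difference lemma, and then show that the adaptive step size forces $\pi_{k+1}$ to be close to $\tilde\pi_{k+1}$ in total variation on each state. First I would apply Lemma~\ref{lem:PDL_main} with $\pi_1=\pi_{k+1}$ and $\pi_2=\tilde\pi_{k+1}$ (or the reverse) and the initial distribution $\rho$, giving
\[
V^{\pi_{k+1}}(\rho)-V^{\tilde\pi_{k+1}}(\rho)=\frac{1}{1-\gamma}\sum_{s,a}d_\rho^{\pi_{k+1}}(s)\,\pi_{k+1}(a|s)\,A^{\tilde\pi_{k+1}}(s,a).
\]
Since $\tilde\pi_{k+1}$ has support contained in $\mathcal{A}_s^{\pi_k}$ and $Q^{\pi_k}$ is maximized exactly on that set, I expect $A^{\tilde\pi_{k+1}}(s,a)$ to vanish for $a\in\mathcal{A}_s^{\pi_k}$ (these are the PI-optimal actions) — more precisely one should be careful that $A^{\tilde\pi_{k+1}}$ is the advantage of $\tilde\pi_{k+1}$, not of $\pi_k$, so I would instead use the cleaner route of writing the difference in terms of $\sum_{a\notin\mathcal{A}_s^{\pi_k}}\pi_{k+1}(a|s)$ times a bounded advantage, using Lemma~\ref{lem:monotone_main} to bound $|A^{\tilde\pi_{k+1}}(s,a)|\le 1/(1-\gamma)$ and $\sum_s d_\rho^{\pi_{k+1}}(s)\le 1$. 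This reduces the whole claim to showing that $\sum_{a\notin\mathcal{A}_s^{\pi_k}}\pi_{k+1}(a|s)\le \exp(-L_k)$ for every $s$.

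Next I would verify that bound directly from the NPG update formula~\eqref{eq:mul-weight-update2} with step size $\eta_k$. For a suboptimal action $a\notin\mathcal{A}_s^{\pi_k}$ and any optimal action $a^\star\in\mathcal{A}_s^{\pi_k}$,
\[
\frac{\pi_{k+1}(a|s)}{\pi_{k+1}(a^\star|s)}=\frac{\pi_k(a|s)}{\pi_k(a^\star|s)}\exp\!\bigl(\eta_k(Q^{\pi_k}(s,a)-Q^{\pi_k}(s,a^\star))\bigr)\le \frac{\pi_k(a|s)}{\pi_k(a^\star|s)}\exp(-\eta_k\Delta^{\pi_k}(s)),
\]
using that $Q^{\pi_k}(s,a^\star)-Q^{\pi_k}(s,a)\ge\Delta^{\pi_k}(s)$ by Definition~\ref{def:delta_state}. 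Since $\pi_{k+1}(a^\star|s)\le 1$ and $\pi_k(a|s)\le 1$, this gives $\pi_{k+1}(a|s)\le \exp(-\eta_k\Delta^{\pi_k}(s))/\pi_k(a^\star|s)$. Summing over the at most $|\mathcal{A}|$ suboptimal actions and choosing $a^\star$ to be the specific optimal action that determines the max in~\eqref{eq: adaptive_step_size}, the condition $\eta_k\Delta^{\pi_k}(s)\ge L_k+\log(|\mathcal{A}|/\pi_k(a^\star|s))$ yields exactly $\sum_{a\notin\mathcal{A}_s^{\pi_k}}\pi_{k+1}(a|s)\le |\mathcal{A}|\cdot\exp(-L_k)\cdot\pi_k(a^\star|s)^{-1}\cdot\exp(-(\eta_k\Delta^{\pi_k}(s)-L_k-\log(|\mathcal{A}|/\pi_k(a^\star|s))))\le\exp(-L_k)$.

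Putting the two pieces together: $|V^{\pi_{k+1}}(\rho)-V^{\tilde\pi_{k+1}}(\rho)|\le \frac{1}{1-\gamma}\sum_s d_\rho^{\pi_{k+1}}(s)\sum_{a\notin\mathcal{A}_s^{\pi_k}}\pi_{k+1}(a|s)|A^{\tilde\pi_{k+1}}(s,a)|\le\frac{1}{1-\gamma}\cdot\frac{1}{1-\gamma}\cdot$ — wait, I must be careful to get only one factor of $1/(1-\gamma)$ as the lemma claims, so the right accounting is: the $1/(1-\gamma)$ prefactor from the performance difference lemma, times $\sum_s d_\rho^{\pi_{k+1}}(s)\sum_{a\notin\mathcal{A}_s^{\pi_k}}\pi_{k+1}(a|s)\cdot|A^{\tilde\pi_{k+1}}(s,a)|$, and here I should bound $|A^{\tilde\pi_{k+1}}(s,a)|$ by $1$ rather than $1/(1-\gamma)$ — this requires observing that $|Q^{\pi_k}(s,a)-Q^{\pi_k}(s,a')|\le 1$ when both lie in $[0,1/(1-\gamma)]$ is too weak, so instead I would carry the advantage of $\tilde\pi_{k+1}$ and note that one can equivalently write the difference using the PI structure so that the relevant quantity is bounded by $1$. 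The main obstacle I anticipate is precisely this bookkeeping: getting the bound to come out as $\frac{1}{1-\gamma}\exp(-L_k)$ rather than $\frac{1}{(1-\gamma)^2}\exp(-L_k)$, which means the advantage term must be bounded by a constant, not by $\frac{1}{1-\gamma}$; resolving this likely requires choosing the direction of the performance difference lemma as $V^{\tilde\pi_{k+1}}-V^{\pi_{k+1}}=\frac{1}{1-\gamma}\sum_{s,a}d_\rho^{\tilde\pi_{k+1}}(s)\tilde\pi_{k+1}(a|s)A^{\pi_{k+1}}(s,a)$ and exploiting that $\tilde\pi_{k+1}$ and $\pi_{k+1}$ differ only through the suboptimal-action mass, together with a telescoping/coupling argument that saves one factor of $(1-\gamma)$.
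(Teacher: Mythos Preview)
Your reduction to the suboptimal mass $\sum_{a\notin\mathcal{A}_s^{\pi_k}}\pi_{k+1}(a|s)$ is the right idea, but the justification you give for dropping the optimal actions does not work: $A^{\tilde\pi_{k+1}}(s,a)$ need \emph{not} vanish for $a\in\mathcal{A}_s^{\pi_k}$, since it is the advantage of $\tilde\pi_{k+1}$, not of $\pi_k$. The paper handles this differently: it subtracts $\sum_a\tilde\pi_{k+1}(a|s)A^{\tilde\pi_{k+1}}(s,a)=0$ from the performance-difference expression, obtaining $\sum_a(\pi_{k+1}(a|s)-\tilde\pi_{k+1}(a|s))A^{\tilde\pi_{k+1}}(s,a)$, and then bounds $|\pi_{k+1}(a|s)-\tilde\pi_{k+1}(a|s)|\le\tfrac{1}{|\mathcal{A}|}\exp(-L_k)$ action by action. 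This requires a second case (for $a\in\mathcal{A}_s^{\pi_k}$) that you do not address; the paper proves it by sandwiching $\pi_{k+1}(a|s)$ between $\tilde\pi_{k+1}(a|s)/(1+\tilde\pi_{k+1}(a|s)e^{-L_k})$ and $\tilde\pi_{k+1}(a|s)$. Your route can in fact be salvaged more simply by noting that, because all $a\in\mathcal{A}_s^{\pi_k}$ share the same $Q^{\pi_k}(s,a)$, the NPG update makes $\pi_{k+1}(\cdot|s)$ restricted to $\mathcal{A}_s^{\pi_k}$ proportional to $\tilde\pi_{k+1}(\cdot|s)$, so $\sum_{a\in\mathcal{A}_s^{\pi_k}}\pi_{k+1}(a|s)A^{\tilde\pi_{k+1}}(s,a)$ is a scalar multiple of $\sum_a\tilde\pi_{k+1}(a|s)A^{\tilde\pi_{k+1}}(s,a)=0$; but you should state this explicitly rather than asserting the advantage vanishes pointwise.

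As for the factor of $(1-\gamma)$ you are worried about: your bookkeeping is correct and the paper's is not. The paper's first displayed line silently drops the $\tfrac{1}{1-\gamma}$ prefactor from Lemma~\ref{lem:PDL_main}; once that is restored, both the paper's argument and yours yield $\tfrac{1}{(1-\gamma)^2}\exp(-L_k)$, since $|A^{\tilde\pi_{k+1}}(s,a)|$ can only be bounded by $\tfrac{1}{1-\gamma}$ in general. There is no coupling or direction-of-PDL trick that recovers the missing factor, so do not look for one; the stated constant in the lemma is a slip, and the downstream Theorem~\ref{thm: npg_adaptive} is unaffected except for replacing $\tfrac{1}{1-\gamma}$ by $\tfrac{1}{(1-\gamma)^2}$ (and $\tfrac{1}{(1-\gamma)^2}$ by $\tfrac{1}{(1-\gamma)^3}$) in the error terms.
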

The proof of Lemma \ref{lemma: npg_pi} is provided in Appendix \ref{sec:Thm_3_proof}. Next, we use the above lemma to prove Theorem \ref{thm: npg_adaptive}.
 \begin{proof}[Proof of Theorem \ref{thm: npg_adaptive}]

By using the triangle inequality, we have
\begin{align}
\label{eq: v_recurs}
    \left| V^{\star} (\rho) - V^{\pi_{k}}(\rho) \right| &\leq | V^{\star} (\rho)  - V^{\Tilde{\pi}_{k}}(\rho) |+  | V^{\pi_{k}} (\rho) - V^{\Tilde{\pi}_{k}}(\rho) |\nonumber\\
    & \stackrel{(a)}{\leq} \gamma | V^{\star} (\rho)  - V^{\pi_{k-1}}(\rho) | + | V^{\pi_{k}} (\rho) - V^{\Tilde{\pi}_{k}}(\rho) |\nonumber\\
    & \stackrel{(b)}{\leq} \gamma | V^{\star} (\rho)  - V^{\pi_{k-1}}(\rho) | +\frac{1}{1-\gamma} \exp(-L_{k-1})\nonumber\\
    &\stackrel{(c)}{\leq} \gamma^{k} | V^{\star} (\rho)  - V^{\pi_{0}}(\rho) |
      + \frac{1}{1-\gamma} \sum_{i=0}^{k-1} \gamma^i \exp\big(-L_{k-i-1}\big),
\end{align}
where (a) follows by the contraction property of PI (e.g. see: \cite[Section 3]{bhandari2020note} and references therein), (b) follows because of Lemma \ref{lemma: npg_pi} and (c) follows by solving the recursion in (b). 

Now if we choose $L_k = Lk$, it can be shown that
\begin{align*}
   \sum_{i=0}^{k-1} \gamma^i \exp\big(-L_{k-i-1}\big)=& \sum_{i=0}^{k-1} \gamma^i \exp\big(-L(k-i-1)\big) \\
   \stackrel{(a)}{\leq}& \beta(k)\max\left\{\gamma^{k},\exp\{-Lk\}\right\}.
\end{align*}
The calculation of the inequality $(a)$ is given in the Appendix \ref{sec:Thm_3_proof}.

Also, for $L_{k} = L$ for all $i$,
\begin{align*}
   \sum_{i=0}^{k-1} \gamma^i \exp\big(-L_{k-i-1}\big)=& \exp(-L) \sum_{i=0}^{k-1} \gamma^i \leq \frac{1}{1-\gamma} \exp(-L)
\end{align*}
This completes the proof of Theorem \ref{thm: npg_adaptive}.
\end{proof}

\subsection{Super-linear Convergence of NPG}
In this section, we show that under certain conditions, NPG enjoys a quadratic rate of convergence, asymptotically. Before presenting the result, we introduce some notations following the exposition in \cite{puterman1979convergence}. For any given policy $\pi$, let $T_\pi=\gamma P_\pi-I$ where $I$ is the identity matrix and $P_\pi$ is the transition probability matrix corresponding to the policy $\pi$, i.e., $[P_\pi]_{s,s'}=\sum_{a \in \mathcal{A}}\mathcal{P}(s'|s,a)\pi(a|s)$. Also, let $\|\cdot\|_\rho$ be the weighted $L_1$-norm with weights given by $\rho$. In particular, for a vector $V\in\mathbb{R}^{|\mathcal{S}|}$, we have $\|V\|_\rho=\sum_{s \in S} \rho(s)|V(s)|$. In addition, for a matrix $T$, define the induced weighted $L_1$-norm as $\|T\|_\rho=\sup_{\|v\|_\rho \leq 1} \|Tv\|_\rho$. For any policy $\pi$, $V^\pi$ denotes the vector of length $|\mathcal{S|}$ with $V^\pi(s)$ as its elements. We have the following result.
\begin{theorem} \label{theorem: asymptotic_adaptive} Let $\{\pi^k\}_{k \in \mathbb{Z}_+}$ be the sequence of policies achieved by the iterates of NGP, and assume that there exists $0<\tilde{L},M<\infty$ and $p \in (1,2]$ such that 
\begin{align}
    \|T_{\pi^k}-T_{\pi^\star}\|_\rho \leq \tilde{L}\|V^{\pi_k}-V^\star\|^{p-1}_\rho, \quad \|T^{-1}_{\pi^k}\|_\rho \leq M \label{eq: lipshitz}
\end{align}
for $k\in \mathbb{Z}_+$. Let $b = \tilde{L}M$  and pick $L_k = Lp^k$, then
\begin{align*}
    |V^{\star}(\rho)-V^{\pi_k}(\rho)| \leq \exp \left( p^{k} \left( \frac{\log b}{p-1} + \frac{p \log p}{(p-1)^2}\right) \right)\left( |V^\star(\rho)-V^{\pi_{0}}(\rho)|^{p^{k}} +   k \exp \left( p^k \left( -\log (1-\gamma) -L \right) \right) \right).
\end{align*}
Thus, under the condition that 
\begin{align}
    \log \left(|V^\star(\rho)-V^{\pi_{0}}(\rho)|\right) < - \left( \frac{\log b}{p-1} + \frac{p \log p}{(p-1)^2}\right), \quad L > \frac{\log b}{p-1} + \frac{p \log p}{(p-1)^2} -\log (1-\gamma), \label{eq: close_enough}
\end{align}
NPG converges super-linearly with order $p$.
\end{theorem}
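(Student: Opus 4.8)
The plan is to run the same argument used to prove Theorem~\ref{thm: npg_adaptive}, but to replace the $\gamma$-linear contraction of policy iteration (step (a) in the recursion~\eqref{eq: v_recurs}) by the \emph{super-linear} contraction that hypothesis~\eqref{eq: lipshitz} provides. Write $e_k := V^\star(\rho)-V^{\pi_k}(\rho)$, and note that since $V^\star \ge V^\pi$ pointwise for every policy, $e_k = \|V^{\pi_k}-V^\star\|_\rho$; this identity is what lets us pass between the norm-based hypothesis~\eqref{eq: lipshitz} and the scalar quantity in the statement. Let $\tilde\pi_{k+1}$ be the specific policy-iteration update of $\pi_k$ defined in~\eqref{eq: PI_update}. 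As in~\eqref{eq: v_recurs}, decompose
\[
e_{k+1}\;\le\;\bigl(V^\star(\rho)-V^{\tilde\pi_{k+1}}(\rho)\bigr)\;+\;\bigl|V^{\tilde\pi_{k+1}}(\rho)-V^{\pi_{k+1}}(\rho)\bigr|,
\]
bound the second term by $\tfrac{1}{1-\gamma}\exp(-L_k)$ via Lemma~\ref{lemma: npg_pi}, and — the new ingredient — bound the first term by $b\,e_k^{\,p}$. With $L_k=Lp^k$ this yields the scalar recursion
\[
e_{k+1}\;\le\; b\,e_k^{\,p}\;+\;\frac{1}{1-\gamma}\,\exp\!\bigl(-Lp^k\bigr),\qquad b=\tilde L M.
\]

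The first term — super-linear contraction of the policy-iteration step — is where~\eqref{eq: lipshitz} enters, and I expect it to be the crux. The key fact (following \cite{puterman1979convergence}) is that policy iteration is exactly Newton's method applied to the Bellman residual operator $\mathcal{B}(V)=\max_\pi\bigl(r_\pi+\gamma P_\pi V\bigr)-V$: one verifies that $V^{\tilde\pi_{k+1}}=V^{\pi_k}-T_{\tilde\pi_{k+1}}^{-1}\mathcal{B}(V^{\pi_k})$, with $T_{\tilde\pi_{k+1}}=\gamma P_{\tilde\pi_{k+1}}-I$ playing the role of the Jacobian along the greedy policy and $V^\star$ the unique zero of $\mathcal{B}$. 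Condition~\eqref{eq: lipshitz} is precisely a H\"older--Lipschitz bound on this Jacobian together with a uniform bound on its inverse, and the classical Newton error estimate then gives
\[
\bigl\|V^{\tilde\pi_{k+1}}-V^\star\bigr\|_\rho\;\le\;\tilde L M\,\bigl\|V^{\pi_k}-V^\star\bigr\|_\rho^{\,p}\;=\;b\,e_k^{\,p}.
\]
The points requiring care are that the improved policy is the particular tie-broken distribution~\eqref{eq: PI_update} rather than an arbitrary greedy policy (so the Newton identity must be checked for that choice), that $\mathcal{B}$ is only piecewise-linear so a semismooth version of the estimate is needed, and that~\eqref{eq: lipshitz} is being invoked along the greedy policies generated en route — legitimate because the adaptive step size~\eqref{eq: adaptive_step_size} forces $\pi_{k+1}$ to be exponentially close to $\tilde\pi_{k+1}$.

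It then remains to unroll $e_{k+1}\le b\,e_k^{\,p}+\tfrac{1}{1-\gamma}\exp(-Lp^k)$, which I would do by a logarithmic substitution followed by induction, in the same spirit as the computation of $\beta(k)$ in Theorem~\ref{thm: npg_adaptive} but with $p$-th powers replacing the geometric factor. Iterating, the pure-Newton part contributes $e_0^{\,p^k}$ scaled by $b^{(p^k-1)/(p-1)}\le\exp\!\bigl(p^k\tfrac{\log b}{p-1}\bigr)$, while each additive error $\tfrac{1}{1-\gamma}\exp(-Lp^{i})$ is amplified through the remaining $k-1-i$ iterations; tracking these amplifications produces $k$ copies of a term of size $\exp\!\bigl(p^k(-\log(1-\gamma)-L)\bigr)$ together with an extra exponential prefactor whose logarithm is $\log p\sum_{j\ge1}jp^{-j}=\tfrac{p\log p}{(p-1)^2}$. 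Collecting these gives exactly
\[
e_k\;\le\;\exp\!\Bigl(p^k\bigl(\tfrac{\log b}{p-1}+\tfrac{p\log p}{(p-1)^2}\bigr)\Bigr)\Bigl(e_0^{\,p^k}+k\exp\!\bigl(p^k(-\log(1-\gamma)-L)\bigr)\Bigr).
\]
Writing $\alpha=\tfrac{\log b}{p-1}+\tfrac{p\log p}{(p-1)^2}$, the first part of~\eqref{eq: close_enough} makes the base $\exp(\alpha)e_0$ of the first term strictly below $1$, and the second part makes the base $\exp(\alpha-\log(1-\gamma)-L)$ of the second term strictly below $1$; since the polynomial factor $k$ is swamped by the double-exponential decay, $e_k\to 0$ with $e_{k+1}=\mathcal{O}(e_k^{\,p})$, i.e.\ order-$p$ convergence. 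The main obstacle, as noted, is establishing the Newton/super-linear contraction of the policy-iteration step in the weighted $L_1$-norm and reconciling~\eqref{eq: lipshitz} and the non-smoothness of $\mathcal{B}$ with the specific update~\eqref{eq: PI_update}; solving the recursion, though it requires care to land the constant $\tfrac{p\log p}{(p-1)^2}$ precisely, is otherwise routine.
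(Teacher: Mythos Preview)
Your proposal is correct and follows essentially the same approach as the paper: the same triangle-inequality decomposition into a policy-iteration term and an NPG-vs-PI term, Lemma~\ref{lemma: npg_pi} for the latter, the order-$p$ contraction of policy iteration from \cite[Theorem~4]{puterman1979convergence} for the former, and then an inductive unrolling of the resulting scalar recursion $e_{k+1}\le b\,e_k^{\,p}+\tfrac{1}{1-\gamma}\exp(-Lp^k)$ (which the paper carries out in Appendix~\ref{app: quadratic_convergence} via Lemma~\ref{lemma: recursion_asymptotic} and the lemma following it). Your more detailed discussion of the Newton interpretation, the tie-broken update~\eqref{eq: PI_update}, and the identification of the constant $\tfrac{p\log p}{(p-1)^2}$ via $\log p\sum_{j\ge 1}jp^{-j}$ goes beyond what the paper spells out, but the skeleton is identical.
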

Note that, by considering $p=2$, NPG enjoys a quadratic rate of convergence. The quadratic convergence is manifested by the following two key steps: (1) quadratic convergence of NPG to policy iteration and (2) quadratic convergence of policy iteration to the optimum. 
The former is achieved by considering an exponentially increasing step-size $L_k = Lp^k$ in Eq. \eqref{eq: adaptive_step_size}, which along with the update equation of NPG, results in a quadratic convergence of NPG to policy iteration. Same result is valid when the step size is picked as $L_k = L \alpha^k$ where $\alpha> p$.

To motivate the necessity of condition \eqref{eq: lipshitz}, note that \cite[Theorem 2]{puterman1979convergence} shows that policy iteration is equivalent to Newton's method for operator equations, and \cite[Theorem 4]{puterman1979convergence} shows the quadratic convergence of policy iteration. In addition, we know that Newton's method enjoy a quadratic rate of convergence, assuming Lipchitz continuity of the gradient of the objective function. The Lipchitz continuity assumption in the Newton's method is equivalent to \eqref{eq: lipshitz} with $p=2$.

Condition \eqref{eq: close_enough} ensures that the initial value function $V^{\pi_0}$ is close to the optimal value function $V^\star$ and the step-size is large enough. In particular, Theorem \ref{theorem: asymptotic_adaptive} is an asymptotic result which shows that NPG enjoys quadratic rate of convergence close to the optimum. Thus, combining Theorem \ref{thm: npg_adaptive} and Theorem \ref{theorem: asymptotic_adaptive}, we conclude that NPG converges linearly given any arbitrary starting policy and enjoys a quadratic rate of convergence once the value function of the current iterate is close to the optimum. Next, we present the proof of the Theorem \ref{theorem: asymptotic_adaptive} below:
\begin{proof}[Proof of Theorem \ref{theorem: asymptotic_adaptive}]
By using the triangle inequality, we have
\begin{align*}
    |V^\star(\rho)-V^{\pi_{k+1}}(\rho)| &\leq |V^\star(\rho)-V^{\tilde{\pi}_k}(\rho)|+|V^{\pi_k}(\rho)-V^{\tilde{\pi}_k}(\rho)| \\
    &\overset{(a)}{\leq} |V^\star(\rho)-V^{\tilde{\pi}_k}(\rho)|+\frac{1}{1-\gamma}\exp(-L_{k}) \\
    &\overset{(b)}{=} \|V^\star-V^{\tilde{\pi}_k}\|_\rho+\frac{1}{1-\gamma}\exp(-L_{k})  \\
    &\overset{(c)}{\leq} b\|V^\star-V^{\pi_{k-1}}\|^p_\rho+\frac{1}{1-\gamma}\exp(-L_{k})\\
    &= b|V^\star(\rho)-V^{\pi_{k-1}}(\rho)|^p+\frac{1}{1-\gamma}\exp(-L_{k}) \\
    & \overset{(d)}{\leq} \exp \left( p^{k+1} \left( \frac{\log b}{p-1} + \frac{p \log p}{(p-1)^2}\right) \right)|V^\star(\rho)-V^{\pi_{0}}(\rho)|^{p^{k+1}} \\
    & \ \ +   k \exp \left( p^k \left( \frac{\log b}{p-1} + \frac{p \log p}{(p-1)^2} - \log (1-\gamma) -L \right) \right),
\end{align*}
where $(a)$ follows by Lemma \ref{lemma: npg_pi}, $(b)$ follows by noting that $V^\star \geq V^{\tilde{\pi}_k}$ component wise and the definition of the weighted $L_1$-norm $\|\cdot\|_\rho$. Next, $(c)$ follows by the order $p$ convergence of PI proved in \cite[Theorem 4]{puterman1979convergence}. Finally, $(d)$ follows by repeating the same procedure $k$ times, the computations for which are presented in Appendix \ref{app: quadratic_convergence}. Now, if $L$ is large enough and $|V^\star(\rho)-V^{\pi_0}|$ is small enough (as given in Eq. \eqref{eq: close_enough}), the error will decay to zero as $k$ increases, and the proof is complete.
\end{proof}

\section{Simulation}\label{sec:exp}
In this section, we compare the algorithms presented in this paper with the state of the art and plot the error versus the number of iterations in Fig. \ref{fig:my_label}. We construct an MDP with 70 states and 10 actions. We generate the reward for each state-action pair uniformly at random between 0 and 1. Similarly, we construct the transition probability matrix by generating a uniform random number between 0 and 1 for each entry and then re-scaling each row to obtain a stochastic matrix.

We simulate 5 different algorithms as listed below:
\begin{itemize}
    \item PG: Policy Gradient \cite{agarwal2019theory} with constant step size where step size was choosen to be $\frac{(1-\gamma)^3}{2|\mathcal{A}|\gamma}$.
    \item NPG-C: Natural Policy Gradient with constant step size as given in Algorithm \ref{alg:NPG}. Here we choose the step size $\eta = \log |\mathcal{A}|$.
    \item NPG-I: Natural Policy Gradient with increasing step size where we pick $\eta_k = -k\log \gamma $. Note that here we are not using the adaptive part while updating the step size.
    \item NPG-A: Natural Policy Gradient with adaptive step size as given in Algorithm \ref{alg:NPG_adaptive} with $L_k = -\log \gamma$. 
    \item NPG-AI: Natural Policy Gradient with adaptive step size as given in Algorithm \ref{alg:NPG_adaptive} with $L_k = -k\log \gamma$.
\end{itemize}
We run these algorithms over the same MDP with the same initial policy. The $x$-axis in Fig. \ref{fig:my_label} denotes the number of iterations and $y$-axis denotes the error in the value function of the current policy with respect to the optimal policy, i.e.,
\begin{equation*}
    \text{Error} = |\mathcal{S}| (V^*(\rho) - V^{\pi_k}(\rho)),
\end{equation*}
where we pick $\rho(s) = 1/|\mathcal{S}|$ for all $s$.
\begin{figure}[h]
    \centering
    \includegraphics[scale=0.90]{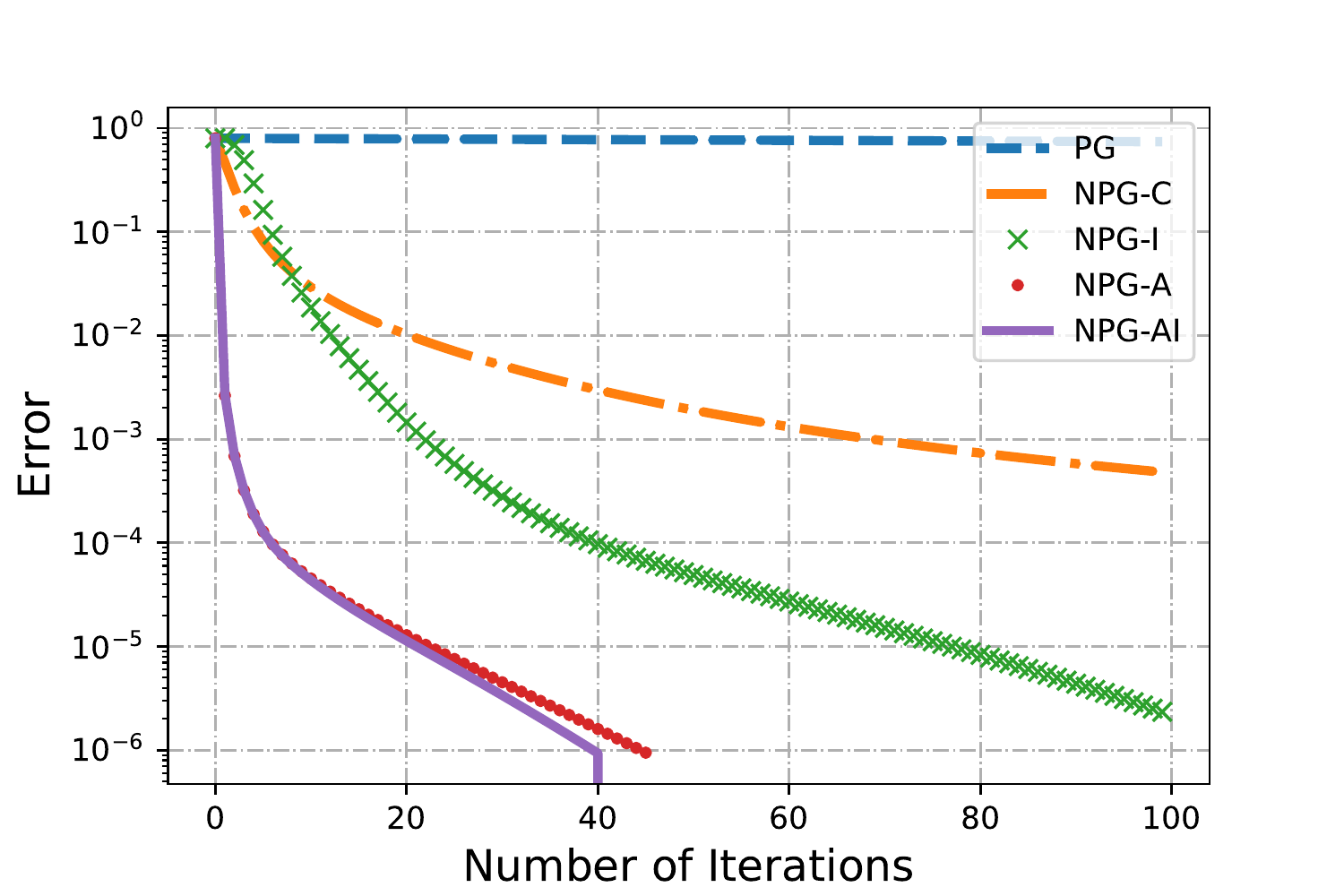}
    \caption{\textit{Error} vs \textit{Number of iterations} plot for the 5 different algorithms on an MDP with 70 states and 10 actions. Here, $x$-axis is on linear scale and $y$-axis is on logarithmic scale.}
    \label{fig:my_label}
\end{figure}

It can be observed from Fig. \ref{fig:my_label} that NPG-A and NPG-I both perform better than NPG-C. This shows that using adaptive step size or using the increasing step size, both improves the performance of Natural Policy Gradient. Furthermore, if we use both adaptive and increasing step size, as in NPG-AI, we get even better performance. Moreover, NPG-A and NPG-AI, both show significant improvement in just one iteration.

\section{Conclusion and Future work}\label{sec:conclusion}
In this paper we studied the Natural Policy Gradient Algorithm, and obtained its finite sample convergence error bounds. We have shown that NPG with constant step size converges geometrically fast asymptotically. In particular, we showed that the rate of $O(1/k)$ upto a threshold, after which it improves to geometric rate. Furthermore, we proposed a variant of NPG with adaptive step size, and showed that it converges geometrically from the beginning. Finally, we have compared these algorithms numerically.

An immediate future work is to employ our analysis for a sample based setting implementation of NPG, also know as Natural actor-critic. Recently, there has been a line of work on the analysis of actor-critic type algorithms \cite{qiu2019finite, shani2020adaptive, wu2020finite, xu2020improving, khodadadian2021finite, khodadadian2021finite2}, where \cite{khodadadian2021finite2} characterizes the best convergence result of $\mathcal{O}(1/k^{1/3})$. By employing the improved convergence rate of NPG proposed in this paper, we believe that it is possible to improve the rate of the stochastic variant.

\bibliographystyle{abbrv}
\bibliography{arXiv.bib}
\pagebreak
\appendix
\appendixpage
\section{Useful Lemmas}
\begin{lemma}\label{lem:monotone}
The value function and the $Q$-function corresponding to the policies generated by the NPG Algorithm \ref{alg:NPG} with constant step size is monotonically increasing, i.e.,
\begin{align*}
V^{\pi_k}(s)&\leq V^{\pi_{k+1}}(s)~\forall s,\\
Q^{\pi_k}(s,a)&\leq Q^{\pi_{k+1}}(s,a)~\forall s,a.    
\end{align*}

Furthermore,
\begin{align}\label{eq:Q_opt_bound}
Q^{\pi_k}(s,a) \leq Q^*(s,a) ~\forall s,a,k,
\end{align}
\end{lemma}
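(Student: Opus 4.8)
The plan is to prove Lemma~\ref{lem:monotone} by exploiting the fact that the NPG update in Eq.~\eqref{eq:mul-weight-update2} performs policy improvement, and then combining this with the performance difference lemma (Lemma~\ref{lem:PDL_main}). The first step is to show that $\pi_{k+1}$ is at least as good as $\pi_k$ at every state, i.e., $V^{\pi_{k+1}}(s)\ge V^{\pi_k}(s)$ for all $s$. To do this, I would examine $\sum_a \pi_{k+1}(a|s)A^{\pi_k}(s,a)$ and argue it is nonnegative. Indeed, since $\pi_{k+1}(a|s)\propto \pi_k(a|s)\exp(\eta Q^{\pi_k}(s,a))$, the new policy puts more mass on actions with larger $Q^{\pi_k}(s,a)$; more precisely, $\pi_{k+1}$ and $\pi_k$ are related so that the covariance between $\pi_{k+1}/\pi_k$ (an increasing function of $Q^{\pi_k}$) and $Q^{\pi_k}$ is nonnegative. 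Concretely, $\sum_a \pi_{k+1}(a|s)Q^{\pi_k}(s,a) - \sum_a \pi_k(a|s)Q^{\pi_k}(s,a)\ge 0$ because exponential tilting toward higher values can only raise the mean; subtracting $V^{\pi_k}(s)=\sum_a\pi_k(a|s)Q^{\pi_k}(s,a)$ from the weighted average gives $\sum_a \pi_{k+1}(a|s)A^{\pi_k}(s,a)\ge 0$.

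The second step is to convert this one-step improvement into the desired monotonicity of the value functions. Applying Lemma~\ref{lem:PDL_main} with $\pi_1=\pi_{k+1}$, $\pi_2=\pi_k$ and initial distribution concentrated at an arbitrary state $s$ (or the point-mass $\mu=\delta_s$), we get
\[
V^{\pi_{k+1}}(s)-V^{\pi_k}(s)=\frac{1}{1-\gamma}\sum_{s',a} d^{\pi_{k+1}}_{\delta_s}(s')\,\pi_{k+1}(a|s')\,A^{\pi_k}(s',a)\ge 0,
\]
since $d^{\pi_{k+1}}_{\delta_s}(s')\ge 0$ and each inner sum $\sum_a \pi_{k+1}(a|s')A^{\pi_k}(s',a)\ge 0$ by Step~1. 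This establishes $V^{\pi_k}(s)\le V^{\pi_{k+1}}(s)$ for all $s$. The monotonicity of the $Q$-functions then follows immediately: $Q^{\pi_{k+1}}(s,a)=\mathcal{R}(s,a)+\gamma\sum_{s'}\mathcal{P}(s'|s,a)V^{\pi_{k+1}}(s')\ge \mathcal{R}(s,a)+\gamma\sum_{s'}\mathcal{P}(s'|s,a)V^{\pi_k}(s')=Q^{\pi_k}(s,a)$, using the Bellman expansion of $Q$ in terms of $V$ together with the already-proven inequality for $V$.

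For the final claim $Q^{\pi_k}(s,a)\le Q^\star(s,a)$, I would argue that $\pi^\star$ dominates every policy, in particular every $\pi_k$: by optimality, $V^\star(s)\ge V^{\pi}(s)$ for all $s$ and all $\pi$ (this is the standard MDP fact invoked in the proof of Lemma~\ref{lem:delta_positive} via \cite[Theorem~5.5.3]{puterman1995markov}). Then the same Bellman expansion gives $Q^\star(s,a)=\mathcal{R}(s,a)+\gamma\sum_{s'}\mathcal{P}(s'|s,a)V^\star(s')\ge \mathcal{R}(s,a)+\gamma\sum_{s'}\mathcal{P}(s'|s,a)V^{\pi_k}(s')=Q^{\pi_k}(s,a)$.

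The main obstacle is the rigorous justification of Step~1, i.e., that exponential tilting of a probability distribution toward higher values of a function $Q^{\pi_k}(s,\cdot)$ does not decrease its expectation relative to the untilted distribution. This is a one-line consequence of the FKG/Chebyshev-type correlation inequality (the mean of a random variable under a measure reweighted by an increasing function of that variable is at least its original mean), but one should spell it out carefully — e.g., by writing $\sum_a(\pi_{k+1}(a|s)-\pi_k(a|s))(Q^{\pi_k}(s,a)-c)$ for a suitable constant $c$ and checking the sign termwise, or by differentiating $\eta\mapsto\sum_a \pi_k(a|s)e^{\eta Q^{\pi_k}(s,a)}Q^{\pi_k}(s,a)/\sum_a\pi_k(a|s)e^{\eta Q^{\pi_k}(s,a)}$ and recognizing it as a variance. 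Everything after that is routine Bellman-equation bookkeeping.
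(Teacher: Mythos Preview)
Your proposal is correct and follows essentially the same route as the paper. The only difference is that the paper does not spell out Step~1 at all: it simply cites \cite[Lemma~5.2]{agarwal2019theory} for $V^{\pi_k}(s)\le V^{\pi_{k+1}}(s)$, then derives the $Q$-monotonicity from the Bellman expansion exactly as you do, and asserts \eqref{eq:Q_opt_bound} as an immediate consequence. Your exponential-tilting/PDL argument is precisely the content of that cited lemma, so you are effectively reproving what the paper outsources; your justification of $Q^{\pi_k}\le Q^\star$ via $V^\star\ge V^{\pi_k}$ and the Bellman expansion is in fact more explicit than the paper's one-line remark.
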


\begin{proof}[Proof of Lemma \ref{lem:monotone}]
The monotonicity of the value function follows directly from \cite[Lemma 5.2]{agarwal2019theory}. Furthermore, for the $Q$-function we have
\begin{align*}
    Q^{\pi_k}(s,a) =& \E[\mathcal{R}(s,a)] + \gamma \sum_{s'}P(s'|s,a)V^{\pi_k}(s')\\
    \leq&\E[\mathcal{R}(s,a)] + \gamma \sum_{s'}P(s'|s,a)V^{\pi_{k+1}}(s')\\
    =&Q^{\pi_{k+1}}(s,a),
\end{align*}
where the inequality follows from the monotonicity of the value function. The inequality in \eqref{eq:Q_opt_bound} is a direct implication of monotonic increase of the $Q^{\pi_k}(s,a)$ for all $s,a$.

\end{proof}

\begin{proof}[Proof of Lemma \ref{lem:monotone_main}]

By definition of the value function, we have

\begin{align*}
V^\pi(s) =& \mathbb{E}\left[\sum_{t=0}^\infty \gamma^t \mathcal{R}(S_t,A_t)\mid S_0\sim s, A_t\sim\pi(\cdot|S_t)\right]\\
\leq & \mathbb{E}\left[\sum_{t=0}^\infty \gamma^t \mid S_0\sim s, A_t\sim\pi(\cdot|S_t)\right]\tag{$\mathcal{R}(S_t,A_t)\leq 1$ by construction}\\
=&\sum_{t=0}^\infty \gamma^t\\
=&\frac{1}{1-\gamma}.
\end{align*}
A similar argument holds for $Q^\pi(s,a)$. Furthermore, $V^{\pi}(s)\geq0$ and $Q^{\pi}(s,a)\geq 0$ follows directly from $\mathcal{R}(S_t,A_t)\geq 0$. In addition, from the definition of $A^\pi(s,a)$, and the non negativity of $V^\pi(s)$, we have 
\[
A^{\pi}(s,a) = Q^{\pi}(s,a)-V^\pi(s)\leq Q^{\pi}(s,a) \leq \frac{1}{1-\gamma}.
\]
$A^{\pi}(s,a)\geq\frac{-1}{1-\gamma}$ follows similarly.
\end{proof}

\begin{lemma}\label{lem:alekh_agarwal}
\textup{(Global Convergence for NPG \cite[Theorem 5.3]{agarwal2019theory})}
Consider the policy $\pi_k$ achieved in the k'th iteration of Algorithm \ref{alg:NPG}. For any given initial distribution $\rho$, we have the following bound for all $k\geq 0$:
\[
V^*(\rho)-V^{\pi_k}(\rho) \leq \left(\frac{\log(|\mathcal{A}|)}{\eta}+\frac{1}{(1-\gamma)^2}\right)\frac{1}{k}.
\]
\end{lemma}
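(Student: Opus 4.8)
\textbf{Proof plan for Lemma \ref{lem:alekh_agarwal}.}

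The statement is quoted from \cite[Theorem 5.3]{agarwal2019theory}, so the plan is to reproduce its standard potential-function argument rather than invent something new. The plan is to track the KL divergence $D_k(s) = \mathcal{KL}(\pi^\star(\cdot|s)\,\|\,\pi_k(\cdot|s))$, averaged over the optimal state-visitation distribution $d^{\pi^\star}_\rho$, as a Lyapunov/potential function, and to show that its one-step decrease lower bounds the per-step progress in value, after which summing a telescoping inequality yields the $1/k$ rate. All the pieces needed are available: Lemma \ref{lem:PDL_main} (performance difference), Lemma \ref{lem:monotone} (monotone improvement of $V^{\pi_k}$ and the bound $Q^{\pi_k}\le Q^\star$), and Lemma \ref{lem:monotone_main} (boundedness of $V,Q,A$).

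First I would write down the NPG update \eqref{eq:mul-weight-update2} in logarithmic form and compute, for each fixed state $s$, the change $D_k(s) - D_{k+1}(s)$. A direct calculation gives
\[
D_k(s) - D_{k+1}(s) = \sum_a \pi^\star(a|s)\,\eta\, Q^{\pi_k}(s,a) \;-\; \log\!\Big(\sum_a \pi_k(a|s) e^{\eta Q^{\pi_k}(s,a)}\Big),
\]
and the log-normalizer is handled by the standard mirror-descent bound: using $\log(\sum_a \pi_k(a|s) e^{x_a}) \le \sum_a \pi_k(a|s) x_a + \tfrac{1}{2}\max_a x_a^2$ (or more simply, that one NPG step does not decrease $V^{\pi_k}(s)$ by Lemma \ref{lem:monotone}, which lets one replace the log-normalizer by $\eta V^{\pi_{k+1}}(s)$ up to the right sign), one arrives at
\[
D_k(s) - D_{k+1}(s) \;\ge\; \eta\sum_a \pi^\star(a|s)\,A^{\pi_k}(s,a) \;-\; \eta\big(V^{\pi_{k+1}}(s) - V^{\pi_k}(s)\big).
\]
Next I would multiply by $d^{\pi^\star}_\rho(s)$, sum over $s$, invoke the performance difference lemma (Lemma \ref{lem:PDL_main}) to identify $\sum_{s} d^{\pi^\star}_\rho(s)\sum_a \pi^\star(a|s) A^{\pi_k}(s,a) = (1-\gamma)\big(V^\star(\rho) - V^{\pi_k}(\rho)\big)$, and use $d^{\pi^\star}_\rho \ge (1-\gamma)\rho$ componentwise together with monotonicity of $V^{\pi_k}$ to control the telescoping value term; this produces
\[
(1-\gamma)\big(V^\star(\rho) - V^{\pi_k}(\rho)\big) \;\le\; \frac{1}{\eta}\big(\bar D_k - \bar D_{k+1}\big) + \big(V^{\pi_{k+1}}(\rho) - V^{\pi_k}(\rho)\big),
\]
where $\bar D_k = \sum_s d^{\pi^\star}_\rho(s) D_k(s)$.

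Finally, summing this inequality from $0$ to $k-1$, the first term on the right telescopes to $\tfrac1\eta(\bar D_0 - \bar D_k) \le \tfrac1\eta \bar D_0 \le \tfrac{\log|\mathcal{A}|}{\eta}$ (the uniform initialization $\pi_0(a|s)=1/|\mathcal{A}|$ gives $D_0(s)\le \log|\mathcal{A}|$), and the second telescopes to $V^{\pi_k}(\rho) - V^{\pi_0}(\rho) \le \tfrac{1}{1-\gamma}$ by Lemma \ref{lem:monotone_main}. Dividing by $k(1-\gamma)$ and using monotonicity of $V^{\pi_k}(\rho)$ to replace the average $\tfrac1k\sum_{j<k}(V^\star(\rho)-V^{\pi_j}(\rho))$ by its smallest term $V^\star(\rho)-V^{\pi_k}(\rho)$ gives exactly
\[
V^\star(\rho) - V^{\pi_k}(\rho) \;\le\; \Big(\frac{\log|\mathcal{A}|}{\eta} + \frac{1}{(1-\gamma)^2}\Big)\frac1k.
\]
The main obstacle is the bookkeeping around the log-normalizer and the state-distribution mismatch: one must be careful that the $d^{\pi^\star}_\rho$-weighting (not $d^{\pi_k}_\rho$) is what makes the performance difference lemma applicable, and that the step-size $\eta$ enters only through the $\log|\mathcal{A}|/\eta$ term while the second term $1/(1-\gamma)^2$ is $\eta$-free; getting the inequality for the log-normalizer tight enough that no extra $\eta$-dependent term survives is the delicate point. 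Since the result is cited verbatim from \cite{agarwal2019theory}, I would present this as a short restatement and refer to that paper for the detailed constants.
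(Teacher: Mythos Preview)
The paper does not give a proof of this lemma at all: it is stated in the appendix as a direct citation of \cite[Theorem~5.3]{agarwal2019theory} and is used as a black box throughout. Your proposal, which reproduces the standard KL-potential (mirror-descent) argument from that reference---compute $D_k(s)-D_{k+1}(s)$ from the multiplicative-weights update, bound the log-normalizer via the monotone improvement $\log\tilde Z_k(s)\le \eta\big(V^{\pi_{k+1}}(s)-V^{\pi_k}(s)\big)$, average against $d^{\pi^\star}_\rho$, invoke the performance difference lemma, telescope, and use monotonicity of $V^{\pi_k}$---is correct and is exactly the argument behind the cited theorem.

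One small bookkeeping slip: after averaging over $d^{\pi^\star}_\rho$, the value-difference term should read $\mathbb{E}_{s\sim d^{\pi^\star}_\rho}\big[V^{\pi_{k+1}}(s)-V^{\pi_k}(s)\big]$, not $V^{\pi_{k+1}}(\rho)-V^{\pi_k}(\rho)$; you cannot use $d^{\pi^\star}_\rho\ge(1-\gamma)\rho$ to pass from the former to the latter (the inequality points the wrong way). This does not matter for the final bound, since the $d^{\pi^\star}_\rho$-weighted term still telescopes and is bounded by $1/(1-\gamma)$ via Lemma~\ref{lem:monotone_main}. Your closing remark---that the exact constants are the delicate part and that one should ultimately defer to \cite{agarwal2019theory}---is well taken and matches what the paper itself does.
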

\section{Details of the proof of Theorem \ref{thm:MDP}} \label{sec:MDP_proof}

\begin{proof}[Proof of Lemma \ref{lem:pi_k_conv}]
By definition of advantage function, we have 
\begin{align}
A^{\pi_k}(s,a)-A^*(s,a) = & V^*(s)-V^{\pi_k}(s) -(Q^*(s,a)-Q^{\pi_k}(s,a))\nonumber\\
\leq& V^*(s)-V^{\pi_k}(s)\tag{Lemma \ref{lem:monotone}}\nonumber\\
\leq & \left(\frac{\log(|\mathcal{A}|)}{\eta}+\frac{1}{(1-\gamma)^2}\right)\frac{1}{k}\tag{Lemma \ref{lem:alekh_agarwal}}\nonumber\\
=& \frac{C}{k},\label{eq:Abound}
\end{align}
where $C=\frac{\log(|\mathcal{A}|)}{\eta}+\frac{1}{(1-\gamma)^2}$. 

For $k\geq \kappa=\frac{\lambda C}{\Delta}$, we have
\begin{align}
\frac{C}{k}\leq & \frac{\Delta}{\lambda} \nonumber\\
=& \frac{-1}{\lambda}\max_{s\notin\mathcal{S}_d,a\notin \mathcal{A}^*_s} A^{*}(s,a) \nonumber\\
=& \frac{1}{\lambda}\min_{s\notin\mathcal{S}_d,a\notin \mathcal{A}^*_s} -A^*(s,a)\nonumber\\
\implies \frac{C}{k}\leq & \frac{-A^*(s,a)}{\lambda}~~\forall s\notin\mathcal{S}_d,a\notin \mathcal{A}^*_s, k\geq\kappa.\label{eq:C/tbound}
\end{align}
Combining \eqref{eq:Abound} and \eqref{eq:C/tbound}, we get:
\begin{align}
A^{\pi_k}(s,a) \leq &(1-1/\lambda)A^{*}(s,a) \nonumber\\
\leq &-(1-1/\lambda)\Delta ~~\forall s\notin \mathcal{S}_d,a\notin \mathcal{A}^*_s, k\geq \kappa.\label{eq:Apitbound}
\end{align}
Furthermore, by applying $\ln(\cdot)$ on both sides of the update of $\pi_k$
\[
\pi_{k+1}(a|s) = \frac{\pi_k(a|s)\exp(\eta Q^{\pi_k}(s,a))}{\sum_{a'}\pi_k(a'|s)\exp(\eta Q^{\pi_k}(s,a'))},
\]
we get:
\begin{align*}
\ln(\pi_{k+1}(a|s)) = &\ln(\pi_{k}(a|s)) + \eta Q^{\pi_k}(s,a) - \ln\left(\sum_{a'}\pi_k(a'|s)\exp(\eta Q^{\pi_k}(s,a'))\right)\\
 \leq &\ln(\pi_{k}(a|s)) + \eta Q^{\pi_k}(s,a) - \eta V^{\pi_k}(s)\\
 =&\ln(\pi_{k}(a|s)) + \eta A^{\pi_k}(s,a),
\end{align*}
where the inequality is due to concavity of $\ln(\cdot)$ function and Jensen's inequality.

Recursively applying the previous inequality, and applying $\exp(\cdot)$ on both sides, for $K\geq \kappa$, we get:
\begin{align}
\pi_K(a|s) \leq & \pi_{\kappa}(a|s) \exp\left(\eta\sum_{k=\kappa}^{K-1} A^{\pi_i}(s,a)\right)\nonumber\\
\leq & \pi_{\kappa}(a|s) \exp\left(-(K-\kappa)(1-1/\lambda)\eta\Delta\right)\quad\forall s\notin \mathcal{S}_d,a\notin \mathcal{A}^*_s, K\geq \kappa\tag{By inequality \eqref{eq:Apitbound}}\nonumber
\end{align}

\end{proof}

\begin{proof}[Proof of Proposition \ref{prop:terminated}]
As shown in the proof of Lemma \ref{lem:pi_k_conv}, for all $k\geq \kappa$, we have
\begin{align}
A^{\pi_k}(s,a) \leq -(1-1/\lambda)\Delta ~~\forall s\notin \mathcal{S}_d,a\notin \mathcal{A}^*_s, k\geq \kappa.\label{eq:Apitbound2}
\end{align}
By definition of the advantage function, for all $s$, we have $\sum_a A^{\pi_\kappa}(s,a)\pi_\kappa(a|s)=0$. Combining this with the result in \ref{eq:Apitbound2}, we can show that $\argmax_aA^{\pi_\kappa}(s,a)=\argmax_{a\in\mathcal{A}^*_s}A^{\pi_\kappa}(s,a)\subset \mathcal{A}^*_s$ for all $s$, and hence $\argmax_aA^{\pi_\kappa}(s,a)\subset \mathcal{A}^*_s$, which completes the proof.
\end{proof}

\section{Single state MDP}\label{sec:Bandit}
In this section we provide the convergence of NPG with constant step size for an MDP with a single state. Due to single state MDP, we can work directly with the objective function itself, instead of the performance difference lemma \ref{lem:PDL_main}, and we can prove the convergence bound using a Lyapunov type argument. Furthermore, we prove the lower bound in Proposition \ref{prop:lower_bound}.

Denote $r^* = \max_a r(a)$, and $\mathcal{A}^*=\{a|r(a)=r^*\}$. Since we assume a single state, throughout this section we do not denote the state space. We have $Q^\star(a)=r(a)+\gamma V^\star$, and $V^\star=\frac{r^\star}{1-\gamma}$. Hence, $A^\star(a)=Q^\star(a)-V^\star=r(a)+\gamma V^\star-V^\star=-(r^\star-r(a))$. Hence, the optimality gap $\Delta$ is as follows:
\[
\Delta = r^*-\max_{a\notin \mathcal{A}^*} r(a).
\]

We have the following theorem:

\begin{theorem}\label{thm:bandit}
Suppose we run Algorithm \ref{alg:NPG} with constant step size for an MDP with a single state. For any $\lambda>0$, denote $\kappa_b = \left\lceil \frac{C(1-\gamma)/\Delta}{1-e^{-\eta \Delta\lambda}}\right\rceil$. For all $K\geq \kappa_b$, with $C=\left(\frac{1}{(1-\gamma)^2}+\frac{\log|\mathcal{A}|}{\eta}\right)$, we have
\[
\frac{1}{1-\gamma}\frac{\Delta}{|\mathcal{A}|}e^{-\eta K \Delta}\stackrel{\text{(a)}}{\leq} V^*-V^{\pi_{K}}\leq e^{-\eta \Delta (1-\lambda)(K-\kappa_b)}(V^*-V^{\pi_{\kappa_b}}) \leq \frac{1}{1-\gamma} e^{-\eta \Delta (1-\lambda)(K-\kappa_b)}.
\]
Furthermore, the lower bound in $(a)$ holds for all $K\geq 0$.

\end{theorem}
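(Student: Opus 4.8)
My plan is to exploit the fact that in a single-state MDP the NPG iteration reduces to an explicit exponential-weights recursion, which puts the whole trajectory in closed form. First I would note that $Q^{\pi_k}(a)=r(a)+\gamma V^{\pi_k}$, so the state-dependent factor $e^{\eta\gamma V^{\pi_k}}$ cancels in the NPG ratio and $\pi_{k+1}(a)\propto\pi_k(a)e^{\eta r(a)}$; since $\pi_0$ is uniform this gives $\pi_k(a)=e^{\eta k r(a)}/\sum_{a'}e^{\eta k r(a')}$, and since $V^\star=r^\star/(1-\gamma)$ and $V^{\pi_k}=\frac{1}{1-\gamma}\sum_a\pi_k(a)r(a)$ we get $V^\star-V^{\pi_k}=\frac{1}{1-\gamma}\sum_{a\notin\mathcal{A}^\star}\pi_k(a)(r^\star-r(a))$. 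The lower bound $(a)$ then falls out immediately: taking $a^\dagger\in\argmax_{a\notin\mathcal{A}^\star}r(a)$ (so $r^\star-r(a^\dagger)=\Delta$) and bounding the normalizer by $\sum_{a'}e^{\eta K r(a')}\le|\mathcal{A}|e^{\eta K r^\star}$ yields $\pi_K(a^\dagger)\ge\frac{1}{|\mathcal{A}|}e^{-\eta K\Delta}$, hence $V^\star-V^{\pi_K}\ge\frac{\Delta}{(1-\gamma)|\mathcal{A}|}e^{-\eta K\Delta}$ for all $K\ge0$; this simultaneously proves Proposition \ref{prop:lower_bound}.

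For the upper bound I would run a Lyapunov argument on $\phi_k:=V^\star-V^{\pi_k}$, using the mass $p_k:=\sum_{a\in\mathcal{A}^\star}\pi_k(a)$ on optimal actions as the auxiliary quantity. Keeping only the optimal terms in the NPG normalizer gives $\sum_{a'}\pi_k(a')e^{\eta r(a')}\ge p_k e^{\eta r^\star}$, so for $a\notin\mathcal{A}^\star$ one gets $\pi_{k+1}(a)\le\frac{\pi_k(a)}{p_k}e^{-\eta(r^\star-r(a))}\le\frac{\pi_k(a)}{p_k}e^{-\eta\Delta}$; multiplying by $r^\star-r(a)$ and summing over $a\notin\mathcal{A}^\star$ gives the one-step contraction $\phi_{k+1}\le\frac{e^{-\eta\Delta}}{p_k}\phi_k$. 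To turn this into a genuine geometric rate I need $p_k$ bounded away from zero. Here I would use that every suboptimal gap is at least $\Delta$, so $\phi_k\ge\frac{\Delta}{1-\gamma}(1-p_k)$, i.e. $p_k\ge1-\frac{(1-\gamma)\phi_k}{\Delta}$, and then feed in the $\mathcal{O}(1/k)$ rate $\phi_k\le C/k$ from Lemma \ref{lem:alekh_agarwal} (with $C=\frac{\log|\mathcal{A}|}{\eta}+\frac{1}{(1-\gamma)^2}$) together with the monotonicity $\phi_{k+1}\le\phi_k$ of Lemma \ref{lem:monotone}: for $k\ge\kappa_b$ this gives $\phi_k\le\phi_{\kappa_b}\le C/\kappa_b$, and the definition $\kappa_b=\lceil\frac{C(1-\gamma)/\Delta}{1-e^{-\eta\Delta\lambda}}\rceil$ is chosen precisely so that $\frac{(1-\gamma)C/\kappa_b}{\Delta}\le1-e^{-\eta\Delta\lambda}$, hence $p_k\ge e^{-\eta\Delta\lambda}$ for all $k\ge\kappa_b$.

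Combining the two, for $k\ge\kappa_b$ the contraction factor satisfies $\frac{e^{-\eta\Delta}}{p_k}\le e^{-\eta\Delta(1-\lambda)}$, so $\phi_{k+1}\le e^{-\eta\Delta(1-\lambda)}\phi_k$; unrolling from $\kappa_b$ to $K$ gives $\phi_K\le e^{-\eta\Delta(1-\lambda)(K-\kappa_b)}\phi_{\kappa_b}$, and the last inequality in the statement follows from the trivial bound $\phi_{\kappa_b}\le\frac{1}{1-\gamma}$ (Lemma \ref{lem:monotone_main}). The \emph{main obstacle} — really the only step that is more than bookkeeping — is the uniform lower bound on $p_k$: it is the single-state analogue of Lemma \ref{lem:pi_k_conv}, but instead of controlling each suboptimal $\pi_k(a)$ separately I track the aggregate optimal mass, and the threshold $\kappa_b$ must be calibrated exactly so that the per-step shrinkage $e^{-\eta\Delta}/p_k$ beats the target rate $e^{-\eta\Delta(1-\lambda)}$. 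For $\lambda\ge1$ the claimed bound is vacuous (the exponent is nonnegative) and still follows from monotonicity, so the argument needs no case split on $\lambda$.
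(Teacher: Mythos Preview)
Your proposal is correct and follows essentially the same route as the paper: both obtain the closed form $\pi_K(a)\propto e^{\eta K r(a)}$ for the lower bound, and both prove the upper bound via the one-step Lyapunov contraction $\phi_{k+1}\le \frac{e^{-\eta\Delta}}{p_k}\phi_k$ (the paper writes this as $\phi_{k+1}\le \phi_k/\sum_a\pi_k(a)e^{\eta(r(a)-r^\star+\Delta)}$ and then drops the suboptimal terms, which is algebraically the same step), and then lower-bound $p_k$ using $\phi_k\ge\frac{\Delta}{1-\gamma}(1-p_k)$ together with the $C/k$ rate from Lemma~\ref{lem:alekh_agarwal}. The only cosmetic difference is that you invoke monotonicity to get $\phi_k\le\phi_{\kappa_b}\le C/\kappa_b$, whereas the paper uses $\phi_k\le C/k\le C/\kappa_b$ directly; both are fine.
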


\begin{proof}[Proof of Theorem \ref{thm:bandit}]
By definition, $V^*-V^{\pi_{k}} = \frac{1}{1-\gamma}\sum_a\pi_k(a)(r^*-r(a))$, and the update of the policy is follows
\[
\pi_{k+1}(a) = \frac{\pi_k(a)\exp(\eta r(a))}{\sum_{a'}\pi_k(a')\exp(\eta r(a'))}.
\]
Using $V^*-V^{\pi_{k}}$ as the Lyapunov function, we have
\begin{align*}
    V^*-V^{\pi_{k+1}} = &\frac{1}{1-\gamma}\frac{\sum_a \pi_k(a)\exp(\eta(r(a)-r^*))(r^*-r(a))}{\sum_a \pi_k(a)\exp(\eta(r(a)-r^*))}\\
    = &\frac{1}{1-\gamma}\left[\frac{\sum_{a\in \mathcal{A}^*} \pi_k(a)\exp(\eta(r(a)-r^*))(r^*-r(a))}{\sum_a \pi_k(a)\exp(\eta(r(a)-r^*))} + \frac{\sum_{a\notin \mathcal{A}^*} \pi_k(a)\exp(\eta(r(a)-r^*))(r^*-r(a))}{\sum_a \pi_k(a)\exp(\eta(r(a)-r^*))}\right]\\
    = &\frac{1}{1-\gamma}\frac{\sum_{a\notin \mathcal{A}^*} \pi_k(a)\exp(\eta(r(a)-r^*))(r^*-r(a))}{\sum_a \pi_k(a)\exp(\eta(r(a)-r^*))}\\
    \leq &\frac{1}{1-\gamma}\frac{\exp(-\eta\Delta)\sum_{a\notin \mathcal{A}^*} \pi_k(a)(r^*-r(a))}{\sum_a \pi_k(a)\exp(\eta(r(a)-r^*))}\\
    = &\frac{1}{1-\gamma}\frac{\exp(-\eta\Delta)\sum_{a} \pi_k(a)(r^*-r(a))}{\sum_a \pi_k(a)\exp(\eta(r(a)-r^*))}\\
    = &\frac{V^*-V^{\pi_{k}}}{\sum_a \pi_k(a)\exp(\eta(r(a)-r^*+\Delta))}.
\end{align*}
Using Lemma \ref{lem:alekh_agarwal}, we have
\begin{align*}
\frac{C}{k}\geq &V^*-V^{\pi_{k}}\\
=&\frac{1}{1-\gamma}\sum_{a\notin \mathcal{A}^*} \pi_k(a)(r^*-r(a))\\
\geq&\frac{\Delta}{1-\gamma}\sum_{a\notin \mathcal{A}^*} \pi_k(a),
\end{align*}
where $C= \left(\frac{1}{(1-\gamma)^2}+\frac{\log|\mathcal{A}|}{\eta}\right)$. Hence we have 
\[
\sum_{a\in \mathcal{A}^*} \pi_k(a) \geq 1-\frac{C(1-\gamma)}{k\Delta}.
\]
Using the above bound we have
\begin{align*}
    \sum_a \pi_k(a)\exp(\eta(r(a)-r^*+\Delta)) = &\sum_{a\in \mathcal{A}*} \pi_k(a)\exp(\eta\Delta) + \sum_{a\notin \mathcal{A}^*} \pi_k(a)\exp(\eta(r(a)-r^*+\Delta))\\
    \geq & \exp(\eta\Delta) \left(1-\frac{C(1-\gamma)}{k\Delta}\right)
\end{align*}
For $k\geq \kappa_b = \left\lceil \frac{C(1-\gamma)/\Delta}{1-e^{-\eta \Delta\lambda}}\right\rceil$, we have
\begin{align*}
    \sum_a \pi_k(a)\exp(\eta(r(a)-r^*+\Delta))  \geq & \exp(\eta\Delta(1-\lambda)).
\end{align*}
Hence, we have
\[
V^*-V^{\pi_{k+1}}\leq e^{-\eta \Delta (1-\lambda)}(V^*-V^{\pi_{k}}).
\]
As a result,
\[
V^*-V^{\pi_{K}}\leq e^{-\eta \Delta (1-\lambda)(K-\kappa_b)}(V^*-V^{\pi_{\kappa_b}}), ~~~ \forall K\geq \kappa_b.
\]

Next we provide the proof of Proposition \ref{prop:lower_bound}, which constructs the lower bound. 

Due to the update of the policy in NPG, for single state MPD it is easy to see that
\[
\pi_K(a)=\frac{\exp(\eta K r(a))}{\sum_b\exp(\eta K r(b))}.
\]
Hence, we have
\begin{align*}
    V^\star-V^{\pi_K}&=\frac{1}{1-\gamma}\sum_a\pi_K(a)(r^*-r(a))\\
    &=\frac{1}{1-\gamma}\frac{\sum_a(r^\star-r(a))\exp(\eta K r(a))}{\sum_a \exp(\eta K r(a))}\\
    & = \frac{1}{1-\gamma}\frac{\sum_{a\notin \mathcal{A}^\star}(r^\star-r(a))\exp(\eta K r(a))}{\sum_a \exp(\eta K r(a))}\\
    &\geq \frac{1}{1-\gamma}\frac{\Delta e^{\eta K (r^\star-\Delta)}}{\sum_a \exp(\eta K r(a))}\\
    &\geq \frac{1}{1-\gamma}\frac{\Delta e^{\eta K (r^\star-\Delta)}}{|\mathcal{A}| \exp(\eta K r^\star)}\\
    &=\frac{1}{1-\gamma}\frac{\Delta}{|\mathcal{A}|}e^{-\eta K \Delta},
\end{align*}
which concludes the result.

\end{proof}

\section{Details of Proof of Theorem \ref{thm: npg_adaptive}} \label{sec:Thm_3_proof}
First we provide the proof of Lemma \ref{lemma: npg_pi}. 
\begin{proof}[Proof of Lemma \ref{lemma: npg_pi}]
Using the performance difference Lemma \ref{lem:PDL_main}, we have
\begin{align}
\label{eq: v_bound}
|  V^{\pi_{k+1}} (\rho) - V^{\Tilde{\pi}_{k+1}}(\rho) | &=\bigg|\mathbb{E}_{s \sim d^{\pi_{k+1}}_\rho}\left[\mathbb{E}_{a \sim \pi_{k+1}(\cdot|s)}\left[A^{\Tilde{\pi}_{k+1}}(s,a)\right]\right]\bigg|\nonumber \\
    & \leq \sup_{s \in \mathcal{S}} \bigg| \sum_{a \in \mathcal{A}} \pi_{k+1}(a|s)A^{\Tilde{\pi}_{k+1}}(s,a) \bigg|\nonumber\\
    & \stackrel{(a)}{\leq } \sup_{s \in \mathcal{S}} \bigg| \sum_{a \in \mathcal{A}} (\pi_{k+1}(a|s)-\Tilde{\pi}_{k+1}(a|s))A^{\Tilde{\pi}_{k+1}}(s,a) \bigg|\nonumber\\
    & \stackrel{(b)}{\leq }\frac{|\mathcal{A}|}{1-\gamma}\sup_{s \in \mathcal{S}, a \in \mathcal{A}} \big|\pi_{k+1}(a|s)-\Tilde{\pi}_{k+1}(a|s) \big|,
\end{align}
where (a) holds since for any policy $\pi$ and any state $s\in \mathcal{S}$, $\sum_{a \in \mathcal{A}} \pi(a|s)A^\pi(s,a) = 0$, and (b) is due to Lemma \ref{lem:monotone}

The adaptive NPG update of policy $\pi_{k+1}$ is given by,
\begin{align*}
    \pi_{k+1} (a|s) = \frac{\pi_k(a|s) \exp(\eta_k Q^{\pi_k}(s,a)) }{\sum_{a'\in \mathcal{A}} \pi_k(a'|s) \exp(\eta_k Q^{\pi_k}(s,a')) }.
\end{align*}
Next, we will now consider two cases. First case is when $a\notin 
\mathcal{A}_s^{\pi_{k}} $. Under this case, we have $\Tilde{\pi}_{k+1}(a|s) =0$. Hence, for $a\notin \mathcal{A}_s^{\pi_{k}}$, we have
\begin{align}
\label{eq: case1}
    \big|\pi_{k+1}(a|s)-\Tilde{\pi}_{k+1}(a|s) \big| &= \frac{\pi_k(a|s) \exp(\eta_k Q^{\pi_k}(s,a)) }{\sum_{a'\in \mathcal{A}} \pi_k(a'|s) \exp(\eta_k Q^{\pi_k}(s,a')) }\nonumber\\
    & \leq \frac{\pi_k(a|s) \exp(\eta_k Q^{\pi_k}(s,a)) }{\sum_{a'\in \mathcal{A}_s^{\pi_{k}}} \pi_k(a'|s) \exp(\eta_k Q^{\pi_k}(s,a')) }\nonumber\\
    & = \frac{\pi_k(a|s)}{\sum_{a'\in \mathcal{A}_s^{\pi_{k}}} \pi_k(a'|s) } \times \exp \left( \eta_k (Q^{\pi_k}(s,a) - \max_{a'\in \mathcal{A}} Q^{\pi_k}(s,a')) \right)\nonumber\\
    &\leq \frac{\pi_k(a|s)}{\sum_{a'\in \mathcal{A}_s^{\pi_{k}}}\pi_k(a'|s) } \exp\left(- \eta_k \Delta^{\pi_k}(s)\right)\nonumber\\
    &\stackrel{(a)}{\leq} \max_{a'\in \mathcal{A}_s^{\pi_{k}}}\left\{ \frac{1}{\pi_{k}(a'|s)} \right\} \exp\left(- \eta_k \Delta^{\pi_k}(s)\right)\nonumber\\
    & \stackrel{(b)}{\leq} \frac{1}{|\mathcal{A}|} \exp(-L_k),
\end{align}
where (a) follows by using $\pi_k(s,a)\leq 1$ and (b) follows by using Eq. (\ref{eq: adaptive_step_size}).

Next we consider the second case, that is when $a\in \mathcal{A}^{\pi_k}_s$. For this case, $Q^{\pi_k}(s,a) = \max_{a'\in \mathcal{A} }Q^{\pi_k}(s,a')$, which gives us
\begin{align*}
    \sum_{a'\in \mathcal{A}}  \pi_k(a'|s) \exp \left(\eta_k (Q^{\pi_k}(s,a') - \max_{a''\in \mathcal{A} }Q^{\pi_k}(s,a''))\right) &\leq  \sum_{a'\in \mathcal{A}^{\pi_k}_s} \pi_k(a'|s) + \sum_{a'\notin \mathcal{A}^{\pi_k}_s} \pi_k(a'|s) \exp \left(-\eta_k\Delta^{\pi_k}(s) \right)\\
    & \leq \sum_{a'\in \mathcal{A}^{\pi_k}_s} \pi_k(a'|s) + \frac{1}{|\mathcal{A}|} \exp(-L_k) \min_{s'\in \mathcal{S}, a'' \in \mathcal{A}^{\pi_k}_{s'}} \pi_k(a''|s'),
\end{align*}
where the last inequality follows by using Eq. \ref{eq: adaptive_step_size} and the fact that $\sum_{a'\notin \mathcal{A}^{\pi_k}_s} \pi_k(a'|s) \leq 1$. By applying the above inequality, for any $a \in \mathcal{A}^{\pi_k}_s$ we get
\begin{align*}
    \pi_{k+1}  (a|s)&= \frac{\pi_k(a|s) \exp(\eta_k Q^{\pi_k}(s,a)) }{\sum_{a'\in \mathcal{A}} \pi_k(a'|s) \exp(\eta_k Q^{\pi_k}(s,a')) }\\
    & = \frac{\pi_k(a|s) }{\sum_{a'\in \mathcal{A}} \pi_k(a'|s) \exp(\eta_k (Q^{\pi_k}(s,a') - \displaystyle \max_{a''\in \mathcal{A} }Q^{\pi_k}(s,a'')))}\\
    & \geq  \frac{\pi_k(a|s)}{\sum_{a'\in \mathcal{A}^{\pi_k}_s} \pi_k(a'|s) + \exp(-L_k) \displaystyle \min_{s'\in \mathcal{S}, a'' \in \mathcal{A}^{\pi_k}_{s'}} \pi_k(a''|s')}\\
    & \geq \frac{1}{\frac{\sum_{a'\in \mathcal{A}^{\pi_k}_s} \pi_k(a'|s)}{\pi_k(a|s) } + \exp(-L_k)}\\
    & = \frac{\Tilde{\pi}_{k+1} (a|s)}{1+\Tilde{\pi}_{k+1} (a|s)\exp(-L_k) },
\end{align*}
where the last equality holds by the definition of $\Tilde{\pi}_{k+1} (a|s)$ in \eqref{eq: PI_update}. In addition, 
\begin{align*}
     \sum_{a'\in  \mathcal{A}}  \pi_k(a'|s) \exp \left(\eta_k (Q^{\pi_k}(s,a') - \max_{a''\in \mathcal{A} }Q^{\pi_k}(s,a''))\right)& \geq  \hspace{-2mm}\sum_{a'\in \mathcal{A}_s^{\pi_k}}\hspace{-2mm}  \pi_k(a'|s) \exp \left(\eta_k (Q^{\pi_k}(s,a') - \max_{a''\in \mathcal{A} }Q^{\pi_k}(s,a''))\right)\\
    & = \sum_{a'\in \mathcal{A}_s^{\pi_k}}  \pi_k(a'|s).
\end{align*}
This shows that for $a \in \mathcal{A}_s^{\pi_k}$
\begin{align*}
    \pi_{k+1}  (a|s)&= \frac{\pi_k(a|s) \exp(\eta_k Q^{\pi_k}(s,a)) }{\sum_{a'\in \mathcal{A}} \pi_k(a'|s) \exp(\eta_k Q^{\pi_k}(s,a')) }\\
    & = \frac{\pi_k(a|s) }{\sum_{a'\in \mathcal{A}} \pi_k(a'|s) \exp(\eta_k (Q^{\pi_k}(s,a') - \displaystyle \max_{a''\in \mathcal{A} }Q^{\pi_k}(s,a'')))}\\
    & \leq \frac{\pi_k(a|s)}{\sum_{a'\in \mathcal{A}_s^{\pi_k}}  \pi_k(a'|s)}\\
    & = \Tilde{\pi}_{k+1}(a|s).
\end{align*}

Thus, for $a \in \mathcal{A}^{\pi_k}_s$ we have 
\begin{align}
\label{eq: case2}
     \big|\pi_{k+1}(a|s)-\Tilde{\pi}_{k+1}(a|s) \big| & = \Tilde{\pi}_{k+1}(a|s) - \pi_{k+1}(a|s)\nonumber \\
     &\leq \Tilde{\pi}_{k+1}(a|s) - \frac{|\mathcal{A}|\Tilde{\pi}_{k+1} (a|s)}{|\mathcal{A}|+\Tilde{\pi}_{k+1} (a|s)\exp(-L_k) }\nonumber\\
     & = \frac{\big(\Tilde{\pi}_{k+1}(a|s)\big)^2 \exp(-L_k)}{|\mathcal{A}|+\Tilde{\pi}_{k+1} (a|s)\exp(-L_k) }\nonumber\\
     &\leq \frac{1}{|\mathcal{A}|} \big(\Tilde{\pi}_{k+1}(a|s)\big)^2 \exp(-L_k)\nonumber\\
     & \leq \frac{1}{|\mathcal{A}|} \exp(-L_k),
\end{align}
where the last inequality follows due to $\Tilde{\pi}_{k+1}(s,a) \leq 1$. By applying the results in Eq. (\ref{eq: case1}) and Eq. (\ref{eq: case2}) in Eq. (\ref{eq: v_bound}), we have
\begin{equation*}
    |  V^{\pi_{k+1}} (\rho) - V^{\Tilde{\pi}_{k+1}}(\rho) | \leq \frac{1}{1-\gamma} \exp(-L_k).
\end{equation*}
This completes the proof of Lemma \ref{lemma: npg_pi}.
\end{proof}

\begin{proof}[Proof of recursion in Theorem \ref{thm: npg_adaptive}]
\begin{align*}
     \sum_{i=0}^k  \gamma^i \exp\{-L(k-i)\} &= \exp\{-Lk\} \sum_{i=0}^k \exp\{(L+\log \gamma)i\} \\
     &=\exp\{-Lk\} \frac{\exp\{(L+\log \gamma)(k+1)\}-1}{\exp\{L+\log \gamma\}-1} \\
     &\leq \begin{cases}
     \frac{\exp\{L\}}{\exp\{L+\log \gamma\}-1}\gamma^{k+1} &\textit{if } L+\log \gamma >0, \\
     \frac{1}{1-\exp\{L+\log \gamma\}}\exp\{-Lk\} &\textit{if } L+\log \gamma <0
     \end{cases} \\
     &\leq \frac{1}{|\gamma - \exp\{-L\}|}\max\left\{\gamma^{k+1},\exp\{-L(k+1)\}\right\}.
\end{align*}
Finally, when $L=-\log(\gamma)$, 
\begin{align*}
     \sum_{i=0}^k  \gamma^i \exp\{-L(k-i)\}&=\sum_{i=0}^k \exp\{-Lk\}\\
     &=(k+1)\exp\{L\}\exp\{-L(k+1)\}
\end{align*}
\end{proof}
\section{Quadratic convergence of NPG} \label{app: quadratic_convergence}

\begin{lemma} \label{lemma: recursion_asymptotic}
For some $p \in (1,2]$, $b>0$, and non-negative sequences $\{r_k\}_{k \in \mathbb{Z}_+}$, and $\{e_k\}_{k \in \mathbb{Z}_+}$, assume
\begin{align*}
    r_k \leq e_{k-1}+b r_{k-1}^p \quad \forall k \in \mathbb{Z}_+.
\end{align*}
Then, we have 
\begin{align}
     r_k \leq \sum_{i=0}^{k-1} e_{k-1-i}^{p^i} b^{\frac{p^i-1}{p-1}} p^{\frac{p^{i+1}-(i+1)p+i}{(p-1)^2}}+p^{\frac{p^{k+1}-(k+1)p+k}{(p-1)^2}}b^{\frac{p^k-1}{p-1}}r_0^{p^k} \quad \forall k \in \mathbb{Z}_+. \label{eq: recursion}
\end{align}
\end{lemma}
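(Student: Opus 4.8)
The plan is to prove the claimed bound \eqref{eq: recursion} by induction on $k$, handling the nonlinearity at each step with a \emph{weighted} Jensen (power‑mean) inequality. To compress notation, set $g(i) = \frac{p^i-1}{p-1}$ and $f(i) = \frac{p^{i+1}-(i+1)p+i}{(p-1)^2}$, so that the target bound is $r_k \le B_k$ with $B_k := \sum_{i=0}^{k-1} e_{k-1-i}^{p^i}\, b^{g(i)}\, p^{f(i)} + r_0^{p^k}\, b^{g(k)}\, p^{f(k)}$. A short computation shows $g(0)=f(0)=0$ and that $g,f$ obey the one‑step recursions $g(i+1)=p\,g(i)+1$ and $f(i+1)=p\,f(i)+(i+1)$; these two identities are the only structural facts about the exponents the argument uses. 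The base case $k=0$ is trivial, since $B_0 = r_0^{p^0} b^0 p^0 = r_0$.

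For the inductive step, assume $r_k\le B_k$. Because $x\mapsto x^p$ is nondecreasing on $[0,\infty)$, the hypothesis $r_{k+1}\le e_k + b\,r_k^p$ gives $r_{k+1}\le e_k + b\,B_k^p$. Peeling the $i=0$ summand off $B_{k+1}$ (it equals exactly $e_k$) and reindexing, it suffices to prove $b\,B_k^p \le \sum_{j=0}^{k-1} e_{k-1-j}^{p^{j+1}} b^{g(j+1)} p^{f(j+1)} + r_0^{p^{k+1}} b^{g(k+1)} p^{f(k+1)}$. Write $B_k=\sum_{j=0}^{k} a_j$, where $a_j$ is the $j$-th displayed summand of $B_k$. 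The crucial step is the weighted Jensen inequality $\bigl(\sum_j a_j\bigr)^p \le \bigl(\sum_j w_j\bigr)^{p-1}\sum_j a_j^p\, w_j^{1-p}$ (convexity of $x\mapsto x^p$), applied with the \emph{geometric} weights $w_j = p^{-j/(p-1)}$. Then $\sum_j w_j \le W:=\bigl(1-p^{-1/(p-1)}\bigr)^{-1}$ and $w_j^{1-p}=p^{j}$, so $b\,B_k^p \le W^{p-1}\sum_j b\,a_j^p\,p^{j}$.

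It then remains to match terms. Using $g(j+1)=p\,g(j)+1$ one gets $b\,a_j^p = e_{k-1-j}^{p^{j+1}} b^{g(j+1)} p^{p f(j)}$ for $j\le k-1$ (and the analogous identity with $r_0$ in place of the $e$'s for $j=k$); multiplying by $p^{j}$ and invoking $f(j+1)=p\,f(j)+(j+1)$ shows that $b\,a_j^p\,p^{j}$ is precisely $p^{-1}$ times the $j$-th term on the right-hand side we want. Hence $b\,B_k^p \le (W^{p-1}/p)\sum_j(\text{target term})$, and the induction closes as soon as we check $W^{p-1}\le p$. This is the sole quantitative estimate: since $\frac{\ln p}{p-1}$ is decreasing, $p^{1/(p-1)}\ge 2^{1/(2-1)}=2$ on $(1,2]$, so $W\le 2$; and $2^{p-1}\le p$ on $[1,2]$ because $2^{p-1}$ is convex and coincides with the line $p$ at the two endpoints. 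I expect the exponent bookkeeping—propagating the $g$ and $f$ recursions through the power‑$p$ step—together with the right choice of geometric weights $w_j$ to be the main obstacle; the rest is routine.
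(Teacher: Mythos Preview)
Your proof is correct and shares the paper's overall strategy: induction on $k$, with the crux being to control $(\sum_j a_j)^p$ by a sum of the form $\sum_j c_j\,a_j^p$ with geometrically growing coefficients $c_j\sim p^{j}$, after which the exponent recursions $g(j+1)=p\,g(j)+1$ and $f(j+1)=p\,f(j)+(j+1)$ match terms exactly. The only difference is tactical, in how that power-sum inequality is obtained. The paper iterates the two-term bound $(a+b)^p\le p(a^p+b^p)$ (itself a consequence of $2^{p-1}\le p$ on $[1,2]$) to reach $(\sum_{j=0}^{k}a_j)^p\le \sum_{j=0}^{k-1}p^{j+1}a_j^p+p^{k}a_k^p$ directly, which already carries the factor $p^{j+1}$ needed on each term. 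You instead apply weighted Jensen once with geometric weights $w_j=p^{-j/(p-1)}$, producing the factor $W^{p-1}p^{j}$, and then invoke the very same numerical estimate $2^{p-1}\le p$ to show $W^{p-1}\le p$. So the two arguments coincide at the level of the final termwise comparison; your single-shot Jensen step is arguably tidier (no iterated peeling), while the paper's approach avoids the auxiliary weight sequence and the constant $W$. Neither route yields a sharper bound than the other.
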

\begin{proof}[Proof of Lemma \ref{lemma: recursion_asymptotic}]
We will prove this lemma by induction. For $k=1$, the RHS of \eqref{eq: recursion} reduces to 
\begin{align*}
    e_0+bpr_0^p \geq e_0+br_0^p \geq r_1,
\end{align*}
where the first inequality holds as $p \in (1,2]$ and $b,r_0>0$ and the second inequality holds by the hypothesis of the lemma. Thus, the base case is satisfied. Now, assume that \eqref{eq: recursion} is true for $k$ and we will now show that it holds for $k+1$.
\begin{align*}
    r_{k+1} &\leq e_k+br_k^p \\
    &\overset{(a)}{\leq} e_k+b\left(\sum_{i=0}^{k-1} e_{k-1-i}^{p^i} b^{\frac{p^i-1}{p-1}} p^{\frac{p^{i+1}-(i+1)p+i}{(p-1)^2}}+p^{\frac{p^{k+1}-(k+1)p+k}{(p-1)^2}}b^{\frac{p^k-1}{p-1}}r_0^{p^k}\right)^p \\
    &\overset{(b)}{\leq} e_k+b\sum_{i=0}^{k-1} p^{i+1} \left(e_{k-1-i}^{p^i} b^{\frac{p^i-1}{p-1}} p^{\frac{p^{i+1}-(i+1)p+i}{(p-1)^2}}\right)^p+bp^k\left(p^{\frac{p^{k+1}-(k+1)p+k}{(p-1)^2}}b^{\frac{p^k-1}{p-1}}r_0^{p^{k}}\right)^p \\
   &\leq e_k+\sum_{i=0}^{k-1} e_{k-1-i}^{p^{i+1}} b^{\frac{p^{i+1}-1}{p-1}} p^{\frac{p^{i+2}-(i+2)p+i+1}{(p-1)^2}}+p^{\frac{p^{k+2}-(k+2)p+k+1}{(p-1)^2}}b^{\frac{p^{k+1}-1}{p-1}}r_0^{p^{k+1}} \\
   & = e_k+\sum_{i=1}^{k} e_{k-i}^{p^{i}} b^{\frac{p^{i}-1}{p-1}} p^{\frac{p^{i+1}-(i+1)p+i}{(p-1)^2}}+p^{\frac{p^{k+2}-(k+2)p+k+1}{(p-1)^2}}b^{\frac{p^{k+1}-1}{p-1}}r_0^{p^{k+1}}\\
   & = \sum_{i=0}^{k} e_{k-i}^{p^{i}} b^{\frac{p^{i}-1}{p-1}} p^{\frac{p^{i+1}-(i+1)p+i}{(p-1)^2}}+p^{\frac{p^{k+2}-(k+2)p+k+1}{(p-1)^2}}b^{\frac{p^{k+1}-1}{p-1}}r_0^{p^{k+1}}
\end{align*}
where $(a)$ follows by the induction hypothesis. Next, $(b)$ follows because any $a,b \in \mathbb{R}_+$ and $p \in (1,2]$, $(a+b)^p \leq p(a^p+b^p)$, which by repeated application, extends to $(\sum_{i=1}^k a_i)^p \leq \sum_{i=1}^{k-1} p^i a_i^k+p^{k-1}a_k^p$. This completes the proof.
\end{proof}
\begin{lemma} Assume that $e_k=a\exp(-Lp^k)$ for all $k \in \mathbb{Z}_+$ for some $a,L>0$, and $\alpha>p$. Then we have
\begin{align*}
    \sum_{i=1}^{k} e_{k-i}^{p^i} b^{\frac{p^i-1}{p-1}} p^{\frac{p^{i+1}-(i+1)p+i}{(p-1)^2}} \leq k\exp\left\{p^{k}\left(\frac{\log b}{p-1}+\frac{p \log p}{(p-1)^2}+\log a-L\right)\right\}
\end{align*}
\end{lemma}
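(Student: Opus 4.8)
The plan is to exploit a cancellation hidden in the exponents. Since $e_{k-i}=a\exp(-Lp^{k-i})$, raising to the power $p^i$ gives
\[
e_{k-i}^{p^i}=a^{p^i}\exp(-Lp^{k-i}p^i)=a^{p^i}\exp(-Lp^{k}),
\]
so the exponential factor $\exp(-Lp^k)$ is the \emph{same} for every index $i$ and pulls out of the sum. Writing the claimed right-hand side as $k\exp(-Lp^k)\exp\{p^k(\tfrac{\log b}{p-1}+\tfrac{p\log p}{(p-1)^2}+\log a)\}$ and cancelling the common positive factor $\exp(-Lp^k)$, the statement reduces to
\[
\sum_{i=1}^{k} a^{p^i}\,b^{\frac{p^i-1}{p-1}}\,p^{\frac{p^{i+1}-(i+1)p+i}{(p-1)^2}}\;\le\; k\,\exp\!\left\{p^k\!\left(\tfrac{\log b}{p-1}+\tfrac{p\log p}{(p-1)^2}+\log a\right)\right\}.
\]

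Since the left-hand side is a sum of exactly $k$ terms, it suffices to bound each summand (the $i$-th, for $1\le i\le k$) by the single exponential on the right. Taking logarithms, I would reduce this to
\[
p^i\log a+\frac{p^i-1}{p-1}\log b+\frac{p^{i+1}-(i+1)p+i}{(p-1)^2}\log p \;\le\; p^k\log a+\frac{p^k}{p-1}\log b+\frac{p^{k+1}}{(p-1)^2}\log p,
\]
and verify it term by term: the gap between the two sides equals $(p^k-p^i)\log a+\tfrac{p^k-p^i+1}{p-1}\log b+\tfrac{\log p}{(p-1)^2}(p^{k+1}-p^{i+1}+(i+1)p-i)$. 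The last summand is nonnegative unconditionally, because $i\le k$ gives $p^{k+1}\ge p^{i+1}$, while $(i+1)p-i=i(p-1)+p>0$, and $\log p>0$ since $p>1$. The first two summands are nonnegative provided $\log a\ge 0$ and $\log b\ge 0$.

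The only point requiring care is therefore ensuring $a\ge 1$ and $b\ge 1$. In the situation where this lemma is applied one has $e_k=\tfrac{1}{1-\gamma}\exp(-L_k)$, so $a=\tfrac{1}{1-\gamma}\ge 1$ automatically; and $b=\tilde L M$ may be replaced by $\max\{b,1\}$ without invalidating the recursion $r_k\le e_{k-1}+br_{k-1}^p$ of Lemma \ref{lemma: recursion_asymptotic} (enlarging $b$ only weakens the hypothesis), so one may assume $b\ge 1$ throughout. Granting this, the term-by-term bound holds, summing the $k$ copies yields the displayed inequality, and reinstating the factored-out $\exp(-Lp^k)$ completes the argument. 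The main obstacle is thus merely the exponent bookkeeping together with flagging this $a,b\ge 1$ normalization; no substantive difficulty arises.
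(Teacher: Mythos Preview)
Your proposal is correct and follows essentially the same route as the paper: both substitute $e_{k-i}^{p^i}=a^{p^i}\exp(-Lp^k)$ to pull out the common factor $\exp(-Lp^k)$, then bound each of the $k$ summands by the single term with $p^i$ replaced by $p^k$, and sum. The paper carries this out by writing each summand as a single exponential, dropping the residual terms $-i\tfrac{\log p}{p-1}-\tfrac{\log b}{p-1}-\tfrac{p\log p}{(p-1)^2}$, and then replacing $p^i$ by $p^k$ in the remaining coefficient; your version combines these two moves into one comparison of exponents. You are in fact more explicit than the paper in flagging that the step requires $a\ge 1$ and $b\ge 1$ (the paper's ``drop the residual'' and ``replace $p^i$ by $p^k$'' steps need the same sign conditions but leave them implicit), and your justification via $a=\tfrac{1}{1-\gamma}\ge 1$ and replacing $b$ by $\max\{b,1\}$ in the recursion is sound.
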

\begin{proof}
\begin{align*}
    \lefteqn{\sum_{i=1}^{k} e_{k-i}^{p^i} b^{\frac{p^i-1}{p-1}} p^{\frac{p^{i+1}-(i+1)p+i}{(p-1)^2}}}\\
    &=\sum_{i=1}^{k} a^{p^i}e^{-Lp^k } b^{\frac{p^i-1}{p-1}} p^{\frac{p^{i+1}-(i+1)p+i}{(p-1)^2}} \\
    &=\sum_{i=1}^{k} \exp\left\{-L p^k+p^i\left(\frac{\log b}{p-1}+\frac{p \log p}{(p-1)^2}+\log a\right)-i\frac{\log p}{p-1}-\frac{\log b}{p-1}-\frac{p}{(p-1)^2}\right\} \\
    &\leq \sum_{i=1}^{k} \exp\left\{-L p^k +p^i\left(\frac{\log b}{p-1}+\frac{p \log p}{(p-1)^2}+\log a\right)\right\} \\
    &\leq \exp\left\{p^{k}\left(\frac{\log b}{p-1}+\frac{p \log p}{(p-1)^2}+\log a \right)\right\}\sum_{i=1}^{k}\exp\left\{-Lp^k \right\} \\
    &=k\exp\left\{p^{k}\left(\frac{\log b}{p-1}+\frac{p \log p}{(p-1)^2}+\log a -L\right)\right\} 
\end{align*}
\end{proof}
\end{document}